\begin{document}

\title{Image sequence interpolation using optimal control \thanks{
    This work is supported by the Zentrale Forschungsf\"orderung,
    Universit\"at Bremen within the PhD group ``Scientific Computing in
    Engineering'' (SCiE).
  }
}

\author{Kanglin Chen         \and
        Dirk A. Lorenz 
}


\institute{Kanglin Chen \at
              SCiE, ZeTeM, University of Bremen, Bibliothekstra\ss e 1, 28359 Bremen, Germany, (+49)421 218-63808 \\
              \email{kanglin@math.uni-bremen.de}           
           \and
           Dirk A. Lorenz \at
	   Institute for Analysis and Algebra, TU Braunschweig, Pockelsstra\ss e 14 - Forum, 38092 Braunschweig, Germany, (+49)531 391-7423 \\
	      \email{d.lorenz@tu-braunschweig.de}
}

\date{Received: date / Accepted: date}

\maketitle
\begin{abstract}
  The problem of the generation of an intermediate image between two
  given images in an image sequence is considered. The problem is
  formulated as an optimal control problem governed by a transport
  equation. This approach bears similarities with the Horn \&
  Schunck method for optical flow calculation but in fact the model is
  quite different. The images are modelled in $BV$ and an analysis of
  solutions of transport equations with values in $BV$ is
  included. Moreover, the existence of optimal controls is proven and
  necessary conditions are derived. Finally, two algorithms are given
  and numerical results are compared with existing methods. The new
  method is competitive with state-of-the-art methods and even
  outperforms several existing methods.
  \keywords{Image
    interpolation \and Optimal control \and Variational methods \and
    Transport equation \and Optical flow \and Characteristic solution
    \and TVD scheme \and Stokes equations \and Mixed finite element
    method}
  \subclass{49J20
    \and
    68U10
    \and
    65D18
  }
\end{abstract}

\section{Introduction}
\label{sec:1}
Image sequence interpolation is the generation of intermediate images between two given images containing some reasonable motion fields. 
It is mainly based on motion estimation and has broad applications in the area of video compression. In video compression, the knowledge of motions helps remove the non-moving parts of images and compress video sequences with high compression rates. For example in the MPEG format, motion estimation is the most computationally expensive portion of the video encoder and normally solved by mesh-based matching techniques, e.g. blocking matching, gradient matching \cite{watkin04}. While decompressing a video intermediate images are generated by warping the image sequence with motion vectors.

Another possibility of image interpolation is based on optical flow estimation. Since Horn and Schunck proposed the gradient-based method for optical flow estimation in their celebrated work \cite{horn81}, this field has been widely developed till now. For example, instead of the linear constraint in the Horn \& Schunck method one applies the non-linear isotropic constraints \cite{aubert99,bruhn89}, anisotropic diffusion constraints \cite{nagel83,enkelmann88} and TV constraint \cite{wedel09} for preserving the flow edges, which is very useful for motion segmentation.  Dealing with large displacements in image sequences one develops warping technique \cite{brox04} to estimate the flow field in a robust way. However, in \cite{hinterberger01} is shown that the Horn \& Schunck method is only suited for optical flow estimation, but not for matching image intensities, especially in case of large displacements, see also the argumentation in~\cite{stich08}.

Borz{\'{i}}, Ito and Kunisch considered  the optical flow problem in the optimal control 
framework \cite{borz02}. Due to an optimal control formulation the estimated flow field is also suitable for image interpolation, since one searches the flow field such that the interpolated  image has a best matching to a given image in the sense of some norm.  In this paper we modify the model proposed in \cite{borz02} for interpolating intermediate images between two given images and analyze the well-posedness of the corresponding minimizing problem. In the end we introduce an efficient numerical method for solving the optimality system and we also propose a modification of the segregation loop of the optimality conditions system, which give better interpolation results and is robust with respect to the choice of regularization parameter.  To evaluate our proposed interpolation methods we will utilize the image database generated by Middlebury College \footnote{\url{http://vision.middlebury.edu/flow/data/}} and compare our results using the evaluation method of Middlebury with the results in \cite{stich08}.

\section{Modeling}
\label{sec:2}
We are interested in finding a flow field, which is suitable for image matching. It means that instead of minimizing the optical flow constraint equation directly,  we utilize the transport equation to fit a given image $u_{0}$ to another given image $u_{T}$ in the sense of some predefined norm in the cost functional.

Let us model the optimal control problem governed by the transport equation. Consider the Cauchy problem for the transport equation in $[0,T]\times\Omega$, $\Omega\subset\mathbb{R}^{d}$ (generally $d=2$):
\begin{equation}
\label{eq:transport}
\left\{
\begin{array}{l}
\partial_{t}u(t,x)+b(t,x)\cdot\nabla_{x}u(t,x)  =  0\quad\text{in }]0,T]\times\Omega, \\
\\
u(0,x) = u_{0}(x)\quad\text{in }\Omega.
\end{array}
\right.
\end{equation}
Here $b:[0,T]\times\Omega\longrightarrow\mathbb{R}^{d}$ is an optical flow field, $u_{0}$ is a given initial condition and $u$ is an unknown function depending on $t$ and $x$. We define the nonlinear solution operator of $\eqref{eq:transport}$
\begin{eqnarray*}
G:X\times Y &\longrightarrow& Z,\\
(u_{0},b)&\mapsto& u,
\end{eqnarray*}
where $X,Y,Z$ are normed spaces to be specified. Then, we define a linear \textquotedblleft observation operator\textquotedblright~$E_{T}:u\mapsto u(T)$, which observes the value of $u$ at time $T$. By the chain $(u_{0},b)\mapsto u\mapsto u(T)$ we have the ``control-to-state mapping''
\begin{eqnarray*}
S:X\times Y &\longrightarrow& U,\\
S:(u_{0},b)&\mapsto&u(T).
\end{eqnarray*}
The space $U$ is a subspace of $Z$, which not involves time $t$. The continuity of $S$ will be investigated in the concrete contexts. 
Our intention is to find the flow field $b$ such that the corresponding image $S(u_{0},b)$ matches the image $u_{T}$ at time $T$ as well as possible. This motivates to minimize the functional $\frac{1}{2}\left\|S(u_{0},b)-u_{T}\right\|^{2}_{U}$. However, this problem is ill-posed and an additional regularization term is needed. This regularized optimal control problem can be formulated as minimizing the following cost functional
\begin{equation}
\label{eq:opc}
\inf\limits_{b\in Y}J(b)=\frac{1}{2}\left\|S(u_{0},b)-u_{T}\right\|^{2}_{U}+\frac{\lambda}{2}\left\|b\right\|^{2}_{Y},\\
\end{equation}
\begin{equation}
\label{eq:divergence-free}
\text{subject to}\quad\mathrm{div}b=0.
\end{equation}
We use Tikhonov regularization to stabilize the cost functional and $\lambda$ is the regularization parameter. In the framework of optimal control \cite{lions71,fred05} we call $b$ the control and $u$ the state. According to the conservation law \cite{hirsch07} and the divergence theorem \cite{riley06}, the divergence free constraint of $b$ will make the  flow volume conserving, smooth and vary not too much inside the flow field of a moving object. Such properties are desired to be enjoyed in image interpolation in case that the moving objects are not getting deformed. Such constraint is not new for optical flow estimation and was similarly introduced as a regularization constraint e.g. in \cite{suter94,kameda07,borz02}. 

We emphasize, that our model is considerably different from the Horn
\& Schunck approach which is based on the optical flow
constraint. There one has a given image $u$ and a given derivative
$\partial_t u$ (both at time $t_0$) and one finds a flow field $b =
(v,w)$ by minimizing
\[
\int_{\Omega} (\partial_t u - b\cdot\nabla u)^2dt + \int_\Omega
|\nabla v|^2 + |\nabla w|^2 dx.
\]
The main conceptual difference between this approach and ours is that
Horn \& Schunck just consider one time $t_0$ and match the flow field
only to that time. Hence, it is unclear in what sense the produced
field $b$ could be useful to match a given image with another one. Our
approach uses two given images and tries to find a flow field $b$
which transports the first image as close as possible to the second
image. The ``optical flow constraint equation'' now enters as a
constraint to the optimization problem and not in the objective
functional itself.

In next chapter we will give some adequate spaces for $u$ and
$b$. Especially we are interested in images $u_0$ and $u_T$ which are
of bounded variation. Hence, we introduce the solution theory of
transport equations equipped with a smooth flow field and a $BV$ image
as initial value. Especially we need to work out conditions under
which the $BV$-regularity is propagated by the flow field. Then, we
will analyze the existence of a minimizer of problem $\eqref{eq:opc}$
restricted to $\eqref{eq:transport}$ and $\eqref{eq:divergence-free}$.


\section{Analysis of Well-posedness}
\label{sec:3}

To analyze the solution operator $G$ we use the method of
characteristics. We start with the analysis of the corresponding ODEs,
then derive existence results for initial values $u_0$ which are of
bounded variation and finally derive a result on the weak sequential
closedness of $G$. Together this shows the existence of an optimal
control in the respective setting.

\subsection{Basic Theory of ODE}
It is well-known that the solution theory of transport equations has a tight relationship with the ordinary differential equation
\begin{equation}
\label{eq:ode}
\left\{
\begin{array}{rcl}
\dot{\gamma}(t) & = & b(t,\gamma(t))\quad t\in I,\\
\\
\gamma(a) & = & x_{0}\quad\text{ in }\Omega.
\end{array}
\right.
\end{equation}
Regarding the solution theory of $\eqref{eq:ode}$, the existence and uniqueness of a solution can be derived by the theorem of Picard-Lindel\"of \cite{hartman02} if $b$ is Lipschitz continuous in space and uniformly continuous in time. We can also relax the assumption on $t$ of $b$ to be integrable by the following Carath\'{e}odory theorem \cite{ambrosio}, which a general version of the  Picard-Lindel\"of theorem:
\begin{theorem}[Carath\'{e}odory]
\label{thm:cara}
Define $I=[a,c]$ and $\Omega$ is a bounded subset in $\mathbb{R}^{d}$. Suppose $b:I\times\Omega\rightarrow\mathbb{R}^{d}$ so that
\begin{enumerate}
\item $t\rightarrow b(t,x)$ is measurable in $I$ for every $x\in\Omega$;
\item there exists $C\geq 0$ with $|b(t,x)-b(t,x')|\leq C|x-x'|$ for a.e. $t\in I$ and every $x,x'\in\bar{\Omega}$;
\item $b(t,x)=0$ for a.e. $t\in I$ and every $x\in\partial\Omega$;
\item the function $m(t)=|b(t,x_{0})|$ is integrable in $I$ for $x_{0}\in\Omega$.
\end{enumerate}
Then, there exists a unique solution $\gamma^{*}:I\rightarrow\Omega$ with
\begin{equation*}
\gamma^{*}(t)=x_{0}+\int\limits_{a}^{t}b(s,\gamma^{*}(s))ds\qquad t\in I
\end{equation*}
to the Cauchy problem $\eqref{eq:ode}$.
\end{theorem}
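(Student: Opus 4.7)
The plan is to obtain $\gamma^{*}$ as a fixed point of the integral operator
\[
\Phi(\gamma)(t)=x_{0}+\int_{a}^{t}b(s,\gamma(s))\,ds
\]
on the Banach space $C(I;\mathbb{R}^{d})$ equipped with a weighted sup-norm, and then use the boundary condition (3) to force the fixed point to lie inside $\Omega$. The first step is to extend $b$ to $I\times\mathbb{R}^{d}$ by setting $b(t,x)=0$ for $x\notin\bar{\Omega}$; thanks to (3), the extension remains Lipschitz in space with the same constant $C$, since for $x\in\bar{\Omega}$ and $y\notin\bar{\Omega}$ the segment $[x,y]$ meets $\partial\Omega$ at some point $z$, so $|b(t,x)-b(t,y)|=|b(t,x)-b(t,z)|\leq C|x-z|\leq C|x-y|$. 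This reduction lets us work on all of $\mathbb{R}^{d}$ while keeping the Lipschitz constant intact.

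Next I would verify that $\Phi$ is well-defined on $C(I;\mathbb{R}^{d})$ and contractive in the weighted norm $\|\gamma\|_{\lambda}=\sup_{t\in I}e^{-\lambda(t-a)}|\gamma(t)|$ for a suitable $\lambda>C$. Well-definedness rests on the measurability of $s\mapsto b(s,\gamma(s))$ for continuous $\gamma$, which follows from (1) and (2) by the standard Carath\'eodory composition argument (approximate $\gamma$ uniformly by step functions), together with the integrable bound $|b(s,\gamma(s))|\leq m(s)+C\,|\gamma(s)-x_{0}|$ coming from (2) and (4). The contraction estimate reduces to $|\Phi(\gamma_{1})(t)-\Phi(\gamma_{2})(t)|\leq C\int_{a}^{t}|\gamma_{1}(s)-\gamma_{2}(s)|\,ds$, and factoring the weight out of the integral yields $\|\Phi(\gamma_{1})-\Phi(\gamma_{2})\|_{\lambda}\leq (C/\lambda)\,\|\gamma_{1}-\gamma_{2}\|_{\lambda}$. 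Banach's fixed point theorem then supplies a unique $\gamma^{*}\in C(I;\mathbb{R}^{d})$ solving the integral equation, giving both existence and uniqueness in that ambient space.

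It remains to show that $\gamma^{*}(t)\in\Omega$ for all $t\in I$, and this is the step I expect to be most delicate. Setting $t_{0}=\inf\{t\in I:\gamma^{*}(t)\notin\Omega\}$ and assuming $t_{0}\leq c$, continuity forces $\gamma^{*}(t_{0})\in\partial\Omega$, and $t_{0}>a$ because $x_{0}\in\Omega$. The constant curve $\tilde{\gamma}(t)\equiv\gamma^{*}(t_{0})$ solves the ODE on all of $I$ because $b(s,\gamma^{*}(t_{0}))\equiv0$ by (3). Combining this with backward uniqueness on $[a,t_{0}]$ --- which itself is obtained by rerunning the weighted-norm contraction argument for the time-reversed field $-b(t_{0}-\tau,\cdot)$ on $[0,t_{0}-a]$ --- forces $\gamma^{*}\equiv\gamma^{*}(t_{0})$ throughout $[a,t_{0}]$, contradicting $\gamma^{*}(a)=x_{0}\in\Omega$. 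The need for this backward-uniqueness step, rather than a direct forward Gronwall bound, is precisely the obstacle one must navigate, since the Carath\'eodory hypothesis in time is not symmetric under time reversal without this extra reduction.
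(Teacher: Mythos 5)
The paper does not prove this theorem at all: it is quoted verbatim from the cited lecture notes of Ambrosio, Tilli and Zambotti as a known generalization of Picard--Lindel\"of, and the text only records the consequence that $\gamma^{*}$ is absolutely continuous. So there is no in-paper argument to compare against; your proposal has to stand on its own, and it essentially does. The zero-extension of $b$ outside $\bar{\Omega}$ (with the segment-crossing argument preserving the Lipschitz constant via hypothesis (3)), the Carath\'eodory composition lemma for measurability of $s\mapsto b(s,\gamma(s))$, the integrable majorant $m(s)+C|\gamma(s)-x_{0}|$, and the Bielecki weighted-norm contraction are exactly the standard route, and the confinement step via the stationary boundary solution plus backward uniqueness is sound. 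Two small points. First, your claim that continuity forces $\gamma^{*}(t_{0})\in\partial\Omega$ uses that $\Omega$ is open (so that $\bar{\Omega}\setminus\Omega=\partial\Omega$ and so that $\gamma^{*}(t_{0})\in\Omega$ would contradict the definition of the infimum); the theorem as stated only says ``bounded subset,'' though openness is part of the paper's standing assumptions and is clearly intended, and you should say you are using it. Second, the worry about time-reversal is overcautious: conditions (1), (2), (4) are invariant under $t\mapsto a+c-t$, and in any case backward uniqueness follows directly from the one-line estimate $|\gamma_{1}(t)-\gamma_{2}(t)|\leq C\int_{t}^{t_{0}}|\gamma_{1}(s)-\gamma_{2}(s)|\,ds$ and the backward Gronwall inequality, with no need to rerun the contraction. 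Neither point is a gap, only a place to tighten the write-up.
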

As a consequence of the proof, the flow $\gamma^{*}(t)$ is absolutely continuous in $[a,c]$. Generally, if we consider the solution in $[0,T]$ with $T>c$, we can restart $\gamma^{*}$ at $(c,\gamma^{*}(c))$ until the unique continuous solution arrives at time $T$. The backward flow is the special case when the time $t$ is smaller than the initial time $a$. 

Next, we want to choose an appropriate function space $Y$ for $b$, which is suitable for the control problem. According to \cite{ambro00} the space of Lipschitz functions is equivalent to $W^{1,\infty}(\Omega)^{d}$, if $\Omega$ is a bounded, convex, open set. According to~\cite{colombini04} lower regularity of the flow field (i.e.~$b\in W^{1,p}$ with $p<\infty$) does not preserve $BV$-regularity. However, the norm in $W^{1,\infty}$ is not well suited as a penalty term since it is difficult to determine the necessary optimality conditions of $b$ equipped with the $L^{\infty}-$norm. Thus, we assume additionally that the domain $\Omega$ enjoys the strong local Lipschitz condition \cite{adams03} and use the fact that $H^{3}_{0}(\Omega)^{d}$ is continuously embedded into $W^{1,\infty}(\Omega)^{d}$ under this assumption, when $\dim(\Omega)=2$. Considering the divergence-free constraint on $b$ we set
\begin{equation*}
H^{3,\mathrm{div}}_{0}(\Omega)^{2}:=\left\{f\in H^{3}_{0}(\Omega)^{2}~\Big|~\mathrm{div}f=0\right\}.
\end{equation*}
Adjusting the assumption on the time of $b$ in Theorem \ref{thm:cara} and previous conditions on $\Omega$ we will assume that 
\begin{itemize}
\item $\Omega\subset\mathbb{R}^{2}$ is a bounded, convex, open set with the strong local Lipschitz condition
\item $b\in L^{2}([0,T];H^{3,\mathrm{div}}_{0}(\Omega)^{2})$
\end{itemize}
throughout the paper unless otherwise stated.
A proper choice for the space $U$ will be discussed in Section~\ref{subsec:ex_minimizer}.

In order to formulate the solution of transport equation in a convenient way, we give the concept of classical flow \cite{crippa07}. 
\begin{definition}
The classical flow of vector field $b$ is a map
\begin{equation*}
\Phi(t,x):[0,T]\times\Omega\longrightarrow\Omega
\end{equation*}
which satisfies
\begin{equation}
\label{eqn:ode}
\left\{
\begin{array}{l}
\dfrac{\partial \Phi}{\partial t}(t,x) = b(t,\Phi(t,x))\quad\text{in }]0,T]\times\Omega,\\
\\
\Phi(0,x) = x\quad\text{in }\Omega.
\end{array}
\right.
\end{equation}
\end{definition}
A helpful property of $\Phi$ will be given in the following corollary.
\begin{corollary}
For every $t\in[0,T]$ the mapping
$\Phi(t,\cdot):\Omega\rightarrow\Omega$
is Lipschitz continuous and a diffeomorphism.
\end{corollary}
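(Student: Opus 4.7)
The plan is to deduce everything from Theorem~\ref{thm:cara} together with the standing assumption that $b\in L^{2}([0,T];H^{3,\mathrm{div}}_{0}(\Omega)^{2})$. First I would observe that the embedding $H^{3}_{0}(\Omega)^{2}\hookrightarrow W^{1,\infty}(\Omega)^{2}$ (valid because $\Omega\subset\mathbb{R}^{2}$ satisfies the strong local Lipschitz condition) yields a constant $C$ such that $|b(t,x)-b(t,x')|\le C\|b(t,\cdot)\|_{H^{3}_{0}}|x-x'|$ for a.e.\ $t$. Writing $L(t):=C\|b(t,\cdot)\|_{H^{3}_{0}}$, the function $L$ lies in $L^{2}([0,T])\subset L^{1}([0,T])$, so hypotheses (1)--(4) of Theorem~\ref{thm:cara} are met on any subinterval (condition (3) is automatic from the zero trace of $H^{3}_{0}$-functions). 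Hence $\Phi(t,x)$ is defined uniquely for every $(t,x)$.

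For the Lipschitz estimate I would fix $x,y\in\Omega$, integrate \eqref{eqn:ode}, and use the spatial Lipschitz bound to obtain
\begin{equation*}
|\Phi(t,x)-\Phi(t,y)|\le|x-y|+\int_{0}^{t}L(s)\,|\Phi(s,x)-\Phi(s,y)|\,ds,
\end{equation*}
so Gronwall's lemma gives $|\Phi(t,x)-\Phi(t,y)|\le|x-y|\exp\!\bigl(\int_{0}^{T}L(s)\,ds\bigr)$, uniformly in $t\in[0,T]$.

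For the diffeomorphism claim, injectivity of $\Phi(t,\cdot)$ is immediate from the uniqueness part of Theorem~\ref{thm:cara} applied backwards in time: if $\Phi(t,x)=\Phi(t,y)$ the unique characteristic through that common point determines $x=y$. For surjectivity, given $y\in\Omega$ I would apply Theorem~\ref{thm:cara} to the reversed vector field $\tilde b(s,\cdot):=-b(t-s,\cdot)$ starting at $y$; the resulting trajectory produces $x\in\Omega$ with $\Phi(t,x)=y$. The inverse $\Phi(t,\cdot)^{-1}$ equals the time-$t$ backward flow and, by the same Gronwall argument applied to $\tilde b$, is also globally Lipschitz, so $\Phi(t,\cdot)$ is bi-Lipschitz. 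If one wants the stronger $C^{1}$ sense of ``diffeomorphism'', it follows from the classical differentiation-with-respect-to-initial-data theorem using that $D_{x}b\in L^{2}([0,T];L^{\infty})$, together with the Liouville identity $\det D_{x}\Phi(t,x)\equiv 1$ (which is a consequence of $\mathrm{div}\,b=0$) to ensure invertibility of $D_{x}\Phi$.

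The step I expect to be the real technical point is the surjectivity, because it requires the backward trajectories to remain inside $\Omega$ rather than escape through $\partial\Omega$. This is exactly what condition~(3) of Theorem~\ref{thm:cara} — which is built into the space $H^{3}_{0}$ via $b|_{\partial\Omega}=0$ — guarantees: $\partial\Omega$ is invariant under the flow in either time direction, so every backward characteristic starting at an interior point stays interior and Theorem~\ref{thm:cara} actually applies on $\Omega$. The rest of the proof is then a routine Gronwall/ODE calculation.
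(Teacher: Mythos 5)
Your proposal is correct and follows essentially the same route as the paper: injectivity from uniqueness of the backward characteristic, surjectivity by running the flow backward from $(t,y)$ (with the zero boundary values of $b$ keeping trajectories inside $\Omega$), the Lipschitz bound via Gronwall, and the $C^{1}$ property from the spatial regularity of $b$ furnished by the embedding of $H^{3}_{0}(\Omega)^{2}$. Your additional details (the explicit $L^{1}$-in-time Lipschitz coefficient and the Liouville identity $\det D_{x}\Phi\equiv 1$) only sharpen the same argument.
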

\begin{proof}
The injectivity can be derived from the uniqueness of the backward flow: If the flow $\Phi$ starts from two points $x_{1}\neq x_{2}$ and arrives at some $t$ at the same point $\Phi(t,x_{1})=\Phi(t,x_{2})=\bar{x}$, the backward flow starting from $(t,\bar{x})$ will be not unique. Regarding the surjectivity: for every point $y\in\Omega$ one can find a backward flow starting from $(t,y)$
\begin{equation*}
\gamma(t') = y + \int\limits_{t}^{t'}b(s,\gamma(s))ds=x\in\Omega,
\end{equation*}
according to Theorem $\ref{thm:cara}$. In case $t'=0$ yields $\Phi(t,x)=y$.

The Lipschitz regularity of $\Phi$ is easily shown by the Gronwall's lemma. For details we refer to~\cite{crippa07}.

Since the Lipschitz continuity gives only the local $C^{1}$-regularity, the $C^{1}$-regularity of $\Phi(t,\cdot)$ in $\Omega$ one can follow the results in \cite{crippa07}, which states that if $b$ has $C^{1}$-regularity in space, then the flow $\Phi(t,\cdot)$ is also $C^{1}$ in space. In fact, $H^{3}_{0}(\Omega)^{2}$ is continuously embedded into $C^{1}(\bar{\Omega})^{2}$, and hence we derive the statement.\qed
\end{proof}

\subsection{Solution Theory of Transport Equations}
In this subsection we will consider the transport equation with the initial value $u_{0}$ in $BV$. The $BV$ space is a natural space for images, since $BV$ contains the functions with discontinuities along hypersurfaces, i.e.~edges of images~\cite{ambro00}. However, the propagation of $BV$ regularity is a delicate matter. We formulate first the solution of transport equations with a smooth initial value:
\begin{corollary}
\label{cor:transport}
Let $u_{0}\in C^{1}(\Omega)$ and $\Phi$ be a classical flow of vector field $b$. Then the transport equation $\eqref{eq:transport}$ has unique solution
\begin{equation}
\label{eq:classic solution}
u(t,x) = u_{0}\circ\Phi^{-1}(t,\cdot)(x).
\end{equation} 
\end{corollary}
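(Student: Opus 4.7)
The plan is to prove the corollary by the classical method of characteristics: define $u$ by the formula on the right-hand side of \eqref{eq:classic solution} and verify (i) that it satisfies the initial condition, (ii) that it satisfies the transport PDE pointwise (for a.e.\ $t$), and (iii) that any other solution must coincide with it. Set $\psi(t,\cdot) := \Phi^{-1}(t,\cdot)$, so $u(t,x) = u_0(\psi(t,x))$. The initial condition is immediate because the previous corollary and the definition of the classical flow give $\Phi(0,\cdot) = \mathrm{id}$, hence $\psi(0,\cdot) = \mathrm{id}$ and $u(0,x) = u_0(x)$.

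Next I would verify the PDE. Differentiating the identity $\Phi(t,\psi(t,x)) = x$ in $t$ (using that $\Phi$ is $C^1$ in space by the previous corollary and absolutely continuous in $t$ by Theorem~\ref{thm:cara}) yields
\[
\partial_t \Phi(t,\psi(t,x)) + D_x\Phi(t,\psi(t,x))\,\partial_t\psi(t,x) = 0,
\]
and since $\partial_t\Phi(t,\psi(t,x)) = b(t,\Phi(t,\psi(t,x))) = b(t,x)$, we obtain $\partial_t\psi(t,x) = -D_x\Phi(t,\psi(t,x))^{-1} b(t,x)$. Differentiating $\Phi(t,\psi(t,x))=x$ in $x$ gives $D_x\Phi(t,\psi(t,x))\,D_x\psi(t,x) = I$, hence $D_x\psi(t,x) = D_x\Phi(t,\psi(t,x))^{-1}$, so $\partial_t\psi = -D_x\psi\, b$. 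Applying the chain rule to $u = u_0\circ\psi$ then yields
\[
\partial_t u + b\cdot\nabla_x u = \nabla u_0(\psi)\cdot\partial_t\psi + (D_x\psi)^{T}\nabla u_0(\psi)\cdot b = 0
\]
for a.e.\ $t\in[0,T]$ and every $x\in\Omega$, which is the desired identity.

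For uniqueness I would take any $C^1$ solution $v$ of \eqref{eq:transport} and consider $g_x(t) := v(t,\Phi(t,x))$. Since $\Phi(\cdot,x)$ is absolutely continuous and $v$ is $C^1$, $g_x$ is absolutely continuous, and
\[
g_x'(t) = \partial_t v(t,\Phi(t,x)) + b(t,\Phi(t,x))\cdot\nabla_x v(t,\Phi(t,x)) = 0
\]
for a.e.\ $t$. Hence $g_x$ is constant, $v(t,\Phi(t,x)) = v(0,x) = u_0(x)$, and substituting $y = \Phi(t,x)$ (which is legitimate since $\Phi(t,\cdot)$ is a diffeomorphism by the previous corollary) gives $v(t,y) = u_0(\psi(t,y))$, proving uniqueness.

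The main obstacle I expect is the limited time regularity of $b$: since $b \in L^2([0,T];H^{3,\mathrm{div}}_0(\Omega)^2)$, the flow $\Phi(\cdot,x)$ is only absolutely continuous in $t$, so the PDE can only be required to hold for a.e.\ $t$, and the chain rule and differentiation of $\Phi(t,\psi(t,x))=x$ in $t$ need to be justified in the absolutely continuous sense rather than classically. Once this is handled (it is standard for AC functions composed with Lipschitz maps), the spatial $C^1$ regularity provided by the embedding $H^3_0(\Omega)^2\hookrightarrow C^1(\bar\Omega)^2$ invoked in the previous corollary takes care of everything else.
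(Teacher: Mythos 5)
Your proof is correct and follows essentially the same route as the paper: the method of characteristics, with uniqueness obtained by showing that any solution composed with the flow $t\mapsto v(t,\Phi(t,x))$ has vanishing derivative and is therefore constant. The only difference is that you additionally verify the candidate formula directly by computing $\partial_t\Phi^{-1}=-D_x\Phi^{-1}\,b$ and applying the chain rule, whereas the paper deduces existence from the constancy-along-characteristics argument alone; your explicit acknowledgement that the time derivative only exists for a.e.\ $t$ (since $b$ is merely $L^2$ in time) is a point the paper glosses over.
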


\begin{proof}
Let us test \eqref{eq:transport} along the characteristics denoted by $(t,\Phi(t,x))$
\begin{eqnarray*}
0 &=&\dfrac{\partial u}{\partial t}(t,\Phi(t,x))+b(t,\Phi(t,x))\cdot\nabla u(t,\Phi(t,x)) \\
&=&\dfrac{\partial u}{\partial t}(t,\Phi(t,x))+\dfrac{\partial \Phi}{\partial t}(t,x)\cdot\nabla u(t,\Phi(t,x)) \\
&=&\dfrac{\partial}{\partial t}(u(t,\cdot)\circ\Phi(t,x)).
\end{eqnarray*}
This implies that every solution is constant along the characteristics. Adjusting the initial value we derive 
$\eqref{eq:classic solution}$ is a solution to \eqref{eq:transport} and the uniqueness follows immediately from the uniqueness of flow $\Phi$.\qed
\end{proof}
Equipped with a non-differentiable initial value the classic solution $\eqref{eq:classic solution}$ will not work. Next, we give the definition of the solution of transport equations in the weak sense.
\begin{definition}[Weak solution]If $b$ and $u_{0}$ are summable functions and $b$ is divergence free in space, then we say that a function $u:[0,T]\times\Omega\rightarrow\mathbb{R}$ is a weak solution of \eqref{eq:transport} if the following identity holds for every function $\varphi\in C^{\infty}_{c}([0,T[\times\Omega):$
\begin{equation}
\label{def:weak}
\int\limits_{0}^{T}\int\limits_{\Omega}u\left(\partial_{t}\varphi+b\cdot\nabla\varphi\right)dxdt = -\int\limits_{\Omega}u_{0}(x)\varphi(0,x)dx.
\end{equation}
\end{definition}
In Theorem $\ref{thm:weaksolution}$ it will be shown that $\eqref{eq:classic solution}$ is actually the unique weak solution of $\eqref{eq:transport}$ with $u_{0}\in BV(\Omega)$. Before we are able to deal with the proof, we recall briefly the weak$^{*}$ topology of $BV$ \cite{ambro00,attouch06,aubert02,aubert99}, 
\begin{equation*}
u_{n} \xrightharpoonup[BV(\Omega)]{*}u\quad:\Leftrightarrow\quad u_{n}\xrightarrow[L^{1}(\Omega)]{}u \text{ and }
Du_{n} \xrightharpoonup[\mathcal{M}(\Omega)]{*}Du
\end{equation*}
which possesses convenient compactness properties in the following theorem \cite{ambro00}.
\begin{theorem}
\label{thm:bv-w*}
Let $(u_{n})\subset BV(\Omega)$. Then $(u_{n})$ converges weakly* to $u$ in $BV(\Omega)$ if and only if $(u_{n})$ is bounded in $BV(\Omega)$ and converges to $u$ in $L^{1}(\Omega)$.
\end{theorem}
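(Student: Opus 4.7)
The plan is to handle the two directions of the equivalence separately, using the definition of weak* convergence recalled just above the theorem: $u_n \xrightharpoonup[BV(\Omega)]{*} u$ means $u_n \to u$ in $L^1(\Omega)$ together with $Du_n \xrightharpoonup[\mathcal{M}(\Omega)]{*} Du$.

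For the ``only if'' direction, $L^1$-convergence is already part of the definition, so only $BV$-boundedness needs to be verified. Since $u_n \to u$ in $L^1$ the norms $\|u_n\|_{L^1}$ are automatically bounded, and it suffices to bound the total variations $\|Du_n\|_{\mathcal{M}(\Omega)}$. This follows from the Banach--Steinhaus (uniform boundedness) theorem: each $Du_n$ is a continuous linear functional on $C_0(\Omega)^d$, weak* convergence makes these functionals pointwise bounded on that predual, and hence norm-bounded.

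For the ``if'' direction, assume $(u_n)$ is bounded in $BV(\Omega)$ and $u_n \to u$ in $L^1(\Omega)$. The vector measures $(Du_n)$ are then bounded in $\mathcal{M}(\Omega)^d \cong (C_0(\Omega)^d)^{\ast}$. Since $C_0(\Omega)$ is separable, Banach--Alaoglu (in its sequential form) yields a weakly* convergent subsequence $Du_{n_k} \xrightharpoonup{*} \mu$ for some $\mu \in \mathcal{M}(\Omega)^d$. To identify $\mu$, I test against arbitrary $\varphi \in C^\infty_c(\Omega)^d$ and integrate by parts, using the $L^1$-convergence on the right-hand side:
\[
  \int_\Omega \varphi \cdot dDu_{n_k} = -\int_\Omega u_{n_k}\,\mathrm{div}\,\varphi\,dx \longrightarrow -\int_\Omega u\,\mathrm{div}\,\varphi\,dx = \int_\Omega \varphi \cdot dDu,
\]
while the weak* convergence simultaneously gives $\int \varphi \cdot dDu_{n_k} \to \int \varphi \cdot d\mu$. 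Hence $\mu$ and $Du$ agree on $C^\infty_c(\Omega)^d$; by density of $C^\infty_c$ in $C_0$ they agree as elements of the dual, so $\mu = Du$. A standard subsequence-of-subsequence argument then upgrades this to weak* convergence of the entire sequence, $Du_n \xrightharpoonup{*} Du$, which together with the $L^1$-convergence is precisely $u_n \xrightharpoonup[BV(\Omega)]{*} u$.

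The one subtle point, which I would expect to be the main obstacle if handled carelessly, is fixing the correct functional-analytic framework for weak* convergence of measures on the open set $\Omega$: $\mathcal{M}(\Omega)$ must be realised as the dual of $C_0(\Omega)$, and the identification of the subsequential limit $\mu$ with $Du$ relies on the density of the smaller class $C^\infty_c(\Omega)$ in this predual. Once this is in place, all remaining ingredients (Banach--Steinhaus, sequential Banach--Alaoglu for separable preduals, the subsequence trick to pass from cluster-point uniqueness to full convergence) are routine.
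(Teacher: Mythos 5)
Your proof is correct. The paper itself does not prove this statement---it is quoted as a known result from Ambrosio--Fusco--Pallara \cite{ambro00} (Proposition~3.13 there)---and your argument (Banach--Steinhaus for the ``only if'' direction; sequential Banach--Alaoglu plus identification of the limit measure by integration by parts against $C^\infty_c(\Omega)^d$ and a subsequence-of-subsequences step for the ``if'' direction) is essentially the standard proof given in that reference. The one point worth making explicit is that the identification step also shows $u\in BV(\Omega)$ in the first place: the distributional derivative of $u$ is exhibited as the finite Radon measure $\mu$, so writing $\int_\Omega\varphi\cdot dDu$ on the right-hand side is justified a posteriori (or, alternatively, by lower semicontinuity of the total variation under $L^1$-convergence).
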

To prove that $\eqref{eq:classic solution}$ is a weak solution of $\eqref{eq:transport}$ it is common to use the technique of mollifiers \cite{evans92}.
In short, we smooth the initial value with a mollifier $\eta_{\epsilon}$ with variance $\epsilon$, let $\epsilon$ converge to zero and investigate the convergence of the solution with a smooth initial value to a nonsmooth initial value. This will be done in next theorem.
\begin{theorem}
\label{thm:bv}
Assume $u_{0}\in BV(\Omega),\varphi$ and $\varphi^{-1}$ are diffeomorphisms and Lipschitz continuous in $\Omega$. Then, the sequence $((u_{0}*\eta_{\epsilon})\circ\varphi)$ converges to $u_{0}\circ\varphi$ in the weak* topology of $BV(\Omega)$.
\end{theorem}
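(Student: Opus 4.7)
The plan is to verify the two hypotheses of Theorem \ref{thm:bv-w*}: boundedness of the sequence in $BV(\Omega)$ and strong convergence in $L^{1}(\Omega)$. Since $\varphi$ is a bi-Lipschitz diffeomorphism, both $D\varphi$ and $D\varphi^{-1}$ are bounded measurable matrix fields (by Rademacher's theorem), and the Jacobians $|\det D\varphi|$, $|\det D\varphi^{-1}|$ are essentially bounded above and bounded away from zero. This is the technical fact that lets every integral transfer smoothly between $\Omega$ and $\varphi(\Omega)$ via the area formula.

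First I would establish $L^{1}$ convergence. The classical mollification result gives $u_{0}\ast\eta_{\epsilon}\to u_{0}$ in $L^{1}(\Omega)$ (after a standard extension of $u_{0}$ near $\partial\Omega$, e.g.\ by zero extension or reflection, so that the convolution is defined on all of $\Omega$). Applying the change of variables $y=\varphi(x)$ yields
\begin{equation*}
\int_{\Omega}\bigl|(u_{0}\ast\eta_{\epsilon})\circ\varphi-u_{0}\circ\varphi\bigr|\,dx
= \int_{\varphi(\Omega)}|u_{0}\ast\eta_{\epsilon}-u_{0}|\,|\det D\varphi^{-1}|\,dy
\le \|\det D\varphi^{-1}\|_{\infty}\,\|u_{0}\ast\eta_{\epsilon}-u_{0}\|_{L^{1}(\varphi(\Omega))},
\end{equation*}
and the right-hand side tends to zero.

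Next I would bound the sequence in $BV(\Omega)$. The same change of variables yields $\|(u_{0}\ast\eta_{\epsilon})\circ\varphi\|_{L^{1}(\Omega)}\le\|\det D\varphi^{-1}\|_{\infty}\|u_{0}\ast\eta_{\epsilon}\|_{L^{1}(\Omega)}$, and the right-hand side is uniformly bounded because mollification is $L^{1}$-stable. For the variation, note that $v_{\epsilon}:=u_{0}\ast\eta_{\epsilon}$ is smooth, hence by the chain rule $\nabla(v_{\epsilon}\circ\varphi)=(D\varphi)^{\top}(\nabla v_{\epsilon})\circ\varphi$ a.e.\ on $\Omega$. Using $\|D\varphi\|_{\infty}\le\Lip(\varphi)$ and again the change of variables,
\begin{equation*}
\int_{\Omega}|\nabla(v_{\epsilon}\circ\varphi)|\,dx
\le \Lip(\varphi)\int_{\Omega}|\nabla v_{\epsilon}|\circ\varphi\,dx
\le \Lip(\varphi)\,\|\det D\varphi^{-1}\|_{\infty}\int_{\varphi(\Omega)}|\nabla v_{\epsilon}|\,dy.
\end{equation*}
Combining this with the standard $BV$ mollification estimate $\|\nabla v_{\epsilon}\|_{L^{1}(\Omega)}\le|Du_{0}|(\Omega)$ gives a uniform bound in $\epsilon$. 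Hence $\bigl((u_{0}\ast\eta_{\epsilon})\circ\varphi\bigr)$ is bounded in $BV(\Omega)$, and Theorem \ref{thm:bv-w*} produces the claimed weak* convergence.

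The main obstacle I expect is the honest handling of the change of variables when $\varphi$ is only Lipschitz rather than $C^{1}$: one must invoke Rademacher's theorem to get differentiability almost everywhere, and the area formula in its bi-Lipschitz form to justify the pull-back computations and the essential boundedness of the Jacobian. A secondary but purely technical issue is ensuring the mollification of $u_{0}$ is well-defined on a neighbourhood of $\overline{\Omega}$, which is handled by a standard $BV$-preserving extension so that the classical facts $u_{0}\ast\eta_{\epsilon}\to u_{0}$ in $L^{1}$ and $|D(u_{0}\ast\eta_{\epsilon})|(\Omega)\le|Du_{0}|(\Omega)$ (up to boundary effects that vanish with $\epsilon$) apply without change.
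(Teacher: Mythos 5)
Your proposal is correct, but it takes a different route from the paper. The paper also begins with the $L^{1}$ estimate via the change of variables $y=\varphi(x)$ and the bound $\left\|\det\nabla\varphi^{-1}\right\|_{L^{\infty}}\le 2L^{2}$, so that part coincides with yours. For the derivatives, however, the paper does \emph{not} invoke Theorem~\ref{thm:bv-w*}; instead it identifies the weak* limit of the measures $D((u_{0}*\eta_{\epsilon})\circ\varphi)$ directly, by integrating against $\psi\in C^{\infty}_{c}(\Omega)^{2}$, integrating by parts, changing variables, commuting the convolution onto the test data via Fubini, passing to the limit using the embedding $BV(\Omega)\hookrightarrow L^{2}(\Omega)$ in dimension two, and finally undoing the substitution with the Gauss--Green formula to recognize the limit as $\int_{\Omega}\psi\, D(u_{0}\circ\varphi)$. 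You instead verify the two hypotheses of Theorem~\ref{thm:bv-w*}: the $L^{1}$ convergence plus a uniform $BV$ bound obtained from the a.e.\ chain rule $\nabla(v_{\epsilon}\circ\varphi)=(D\varphi)^{\top}(\nabla v_{\epsilon})\circ\varphi$ (Rademacher) and the standard estimate $\|\nabla v_{\epsilon}\|_{L^{1}}\le|Du_{0}|(\Omega)$. Your route is shorter and lets the compactness characterization do the work of identifying the limit measure (lower semicontinuity guarantees $u_{0}\circ\varphi\in BV$); the paper's route is more computational but produces the limit $D(u_{0}\circ\varphi)$ explicitly and avoids the Lipschitz chain rule. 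Interestingly, the two ingredients you use appear in the paper anyway, but in reversed logical order: Remark~\ref{rem:bv} \emph{deduces} the uniform $BV$ bound from the already-proved weak* convergence via the converse direction of Theorem~\ref{thm:bv-w*}, and Lemma~\ref{lem:bv-bounded n} later carries out essentially your direct bound (change of variables on $|\nabla(u_{0}*\eta_{\epsilon})|$ with a uniform Jacobian bound) for the family of flows $\varphi_{n}^{-1}$. Your attention to the extension of $u_{0}$ needed to define the mollification near $\partial\Omega$, and to the use of the area formula for merely bi-Lipschitz $\varphi$, addresses technical points the paper passes over silently.
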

\begin{proof}
Let us verify first the $L^{1}$-convergence of $(u_{0}*\eta_{\epsilon})\circ\varphi$ and set $\varphi(x)=y$
\begin{eqnarray*}
&&\int\limits_{\Omega}|(u_{0}*\eta_{\epsilon})\circ\varphi(x)-u_{0}\circ\varphi(x)|dx\\
&=&\int\limits_{\Omega}|u_{0}*\eta_{\epsilon}(y)-u_{0}(y)||\det(\nabla\varphi^{-1}(y))|dy\\
&\leq&\left\|u_{0}*\eta_{\epsilon}-u_{0}\right\|_{L^{1}(\Omega)}\left\|\det(\nabla\varphi^{-1})\right\|_{L^{\infty}(\Omega)}.
\end{eqnarray*}
Let $L$ be the Lipschitz constant of $\varphi^{-1}$  i.e. $L=\left\|\nabla\varphi^{-1}\right\|_{L^{\infty}(\Omega)^{4}}$, then $\left\|\det(\nabla\varphi^{-1})\right\|_{L^{\infty}(\Omega)}$ is bounded from above by $2L^{2}$. Together with the approximation property of mollifiers this gives the $L^{1}-$convergence. Regarding the weak$^*$ convergence of Radon measures $\nabla(u_0*\eta_\epsilon)$ we observe that for every $\psi\in C^{\infty}_{c}(\Omega)^{2}$ it holds
\begin{eqnarray}
\label{eq:eq1}
&&\int\limits_{\Omega}\nabla((u_{0}*\eta_{\epsilon})\circ\varphi)\psi dx\nonumber\\
&=&-\int\limits_{\Omega}(u_{0}*\eta_{\epsilon})\circ\varphi\mathrm{div}\psi dx\nonumber\\
&=&-\int\limits_{\Omega}(u_{0}*\eta_{\epsilon})(y)\mathrm{div}(\psi\circ\varphi^{-1}(y))|\det\nabla\varphi^{-1}(y)|dy\nonumber\\
&=&-\int\limits_{\Omega}\int\limits_{\Omega}\eta_{\epsilon}(y-s)u_{0}(s)ds\mathrm{div}(\psi\circ\varphi^{-1}(y))|\det\nabla\varphi^{-1}(y)|dy\nonumber\\
&=&-\int\limits_{\Omega}\int\limits_{\Omega}\eta_{\epsilon}(y-s)\mathrm{div}(\psi\circ\varphi^{-1}(y))|\det\nabla\varphi^{-1}(y)|dyu_{0}(s)ds\nonumber\\
&=&-\int\limits_{\Omega}\eta_{\epsilon}*\left(\mathrm{div}(\psi\circ\varphi^{-1})|\det\nabla\varphi^{-1}|\right)(s)u_{0}(s)ds.
\end{eqnarray}
Since $\varphi^{-1}$ is $C^{1}$ and Lipschitz continuous in $\Omega$, the convolved term belongs to $L^{2}(\Omega)$. Recall that in the two dimensional case $BV(\Omega)$ is continuously embedded into $L^{2}(\Omega)$, then utilizing the approximate property of mollifiers implies that the equation $\eqref{eq:eq1}$ converges to
\begin{eqnarray*}
&&-\int\limits_{\Omega}\mathrm{div}(\psi\circ\varphi^{-1}(s))|\det\nabla\varphi^{-1}(s)|u_{0}(s)ds\\
&\stackrel{\varphi(\xi)=s}{=}&-\int\limits_{\Omega}\mathrm{div}\psi(\xi)u_{0}(\varphi(\xi))d\xi\\
&\stackrel{(*)}{=}&\int\limits_{\Omega}\psi D(u_{0}\circ\varphi)
\end{eqnarray*}
In $(*)$ we applied the Gauss-Green formula for the $BV$ functions \cite{evans92}.\qed
\end{proof}

\begin{remark}
\label{rem:bv}
Under the same assumptions of Theorem $\ref{thm:bv}$ one can derive from Theorem $\ref{thm:bv-w*}$ that $((u_{0}*\eta_{\epsilon})\circ\varphi)$ is uniformly bounded in $BV(\Omega)$ and converges to $u_{0}\circ\varphi$ in $L^{1}(\Omega)$, actually also in $L^{p}(\Omega)$ with $p\leq 2$ due to the approximate property of mollifiers and the fact $BV(\Omega)$ has a continuous embedding into $L^{2}(\Omega)$ in the two dimensional case.
\end{remark}

\begin{lemma}
\label{lem:conti-time}
Assume that $u_{0}\in BV(\Omega)$, $\varphi(t,\cdot)$ and $\varphi^{-1}(t,\cdot)$ are diffeomorphisms  in $\Omega$ for every $t\in [0,T]$ and $\varphi(\cdot,x)$ is absolutely continuous in $[0,T]$ for every $x\in\Omega$. Define 
\begin{equation*}
u_{\epsilon}(t,x)=(u_{0}*\eta_{\epsilon})\circ\varphi(t,x).
\end{equation*} 
Then, $u_{\epsilon}\in C([0,T];BV(\Omega))$.
\end{lemma}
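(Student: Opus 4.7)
The plan is to exploit that $v_\epsilon := u_0 * \eta_\epsilon$ is smooth on $\bar\Omega$ (indeed $C^\infty$ with bounded derivatives, the bounds depending on $\epsilon$), so $u_\epsilon(t,\cdot) = v_\epsilon \circ \varphi(t,\cdot)$ inherits Lipschitz regularity in $x$ from the Lipschitz diffeomorphism $\varphi(t,\cdot)$, and continuity in $t$ reduces to controlling how $\varphi(t,\cdot)$ moves with $t$. The strategy is to verify the two hypotheses of Theorem~\ref{thm:bv-w*} ($L^1$-convergence plus a uniform $BV$ bound) and conclude continuity into $BV(\Omega)$ via weak* convergence, using the preceding results on the flow.

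First, I would establish $L^1$-continuity: Lipschitz continuity of $v_\epsilon$ yields
\[
\|u_\epsilon(t,\cdot) - u_\epsilon(t_0,\cdot)\|_{L^1(\Omega)} \leq \Lip(v_\epsilon) \int_\Omega |\varphi(t,x) - \varphi(t_0,x)|\,dx.
\]
The assumed pointwise absolute continuity of $\varphi(\cdot,x)$ yields $\varphi(t,x)\to \varphi(t_0,x)$ as $t\to t_0$ for every $x$, and since $\varphi$ takes values in the bounded set $\Omega$, Lebesgue's dominated convergence theorem sends the right-hand side to zero.

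Second, I would derive a uniform-in-$t$ $BV$ bound. Since $u_\epsilon(t,\cdot)$ is Lipschitz, the chain rule gives
\[
|Du_\epsilon(t,\cdot)|(\Omega) = \int_\Omega |D\varphi(t,x)^T \nabla v_\epsilon(\varphi(t,x))|\,dx \leq \|\nabla v_\epsilon\|_{L^\infty} \int_\Omega |D\varphi(t,x)|\,dx.
\]
Under the paper's standing assumption $b \in L^2([0,T];H^{3,\mathrm{div}}_0(\Omega)^2) \hookrightarrow L^2([0,T];W^{1,\infty})$, Gronwall's lemma applied to the variational equation for the Jacobian $D\varphi = D\Phi$ gives the uniform bound $\|D\varphi(t,\cdot)\|_{L^\infty(\Omega)} \leq \exp(\sqrt{T}\,\|b\|_{L^2(W^{1,\infty})})$. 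Combined with the trivial bound $\|u_\epsilon(t,\cdot)\|_{L^1(\Omega)} \leq \|v_\epsilon\|_{L^\infty}|\Omega|$, this yields $\|u_\epsilon(t,\cdot)\|_{BV(\Omega)} \leq C_\epsilon$ uniformly in $t$, and Theorem~\ref{thm:bv-w*} completes the proof.

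The main obstacle is the uniform Jacobian bound: the abstract assumptions of the lemma (diffeomorphism property and pointwise absolute continuity in $t$) do not by themselves suffice, and one genuinely needs the standing regularity of $b$ to control $D\varphi$ uniformly via the variational equation. If strict $BV$-norm continuity is required rather than weak*, an additional short argument bounds $\int_\Omega |\nabla u_\epsilon(t) - \nabla u_\epsilon(t_0)|\,dx$ by splitting the difference via the chain rule, using continuity of $\nabla v_\epsilon$ together with pointwise convergence of $\varphi$ for one term, and the estimate $|D\varphi(t,x) - D\varphi(t_0,x)| \leq C\sqrt{|t-t_0|}\,\|b\|_{L^2(W^{1,\infty})}$ (again from the variational equation plus Cauchy--Schwarz) for the other.
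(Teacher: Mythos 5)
Your proposal cannot be checked against a written proof, because the paper explicitly skips one: it only remarks that the lemma is ``a trivial result utilizing the substitution technique introduced in the proof of Theorem~\ref{thm:bv}'', i.e.\ the change of variables $y=\varphi(t,x)$ with the Jacobian-determinant factor (the route taken in Lemma~\ref{lem:bv-bounded n}). Your argument is a correct and genuinely more careful substitute. It differs in mechanism --- you work with the chain rule $\nabla u_\epsilon(t,\cdot)=D\varphi(t,\cdot)^{T}\nabla v_\epsilon\circ\varphi(t,\cdot)$ and the variational equation of the flow rather than with $|\det\nabla\varphi^{-1}|$ under a substitution --- but both routes ultimately rest on the same Gronwall bound that the paper records as \eqref{eq:upperbound}. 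Your most valuable observation is exactly the one the paper glosses over: the lemma's stated hypotheses (each $\varphi(t,\cdot)$ a diffeomorphism, $t\mapsto\varphi(t,x)$ absolutely continuous pointwise) do \emph{not} by themselves give a $t$-uniform bound on $D\varphi(t,\cdot)$, so the standing assumption $b\in L^{2}([0,T];H^{3,\mathrm{div}}_{0}(\Omega)^{2})\hookrightarrow L^{2}([0,T];W^{1,\infty})$ must be imported to make the statement true; this is consistent with how the lemma is actually used (with $\varphi=\Phi^{-1}$). Two minor points. First, your main body only delivers weak* continuity of $t\mapsto u_\epsilon(t,\cdot)$ in $BV$, whereas $C([0,T];BV(\Omega))$ normally means norm continuity; your appended argument (uniform continuity of $\nabla v_\epsilon$ plus the $\sqrt{|t-t_0|}$ modulus for $D\varphi$ from the variational equation and Cauchy--Schwarz) does close this, and in any case the paper only ever uses the consequence $u_\epsilon\in L^{\infty}([0,T];BV(\Omega))$ in Theorem~\ref{thm:weaksolution}, which your uniform bound already provides. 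Second, when the lemma is applied to the backward flow $\varphi=\Phi^{-1}(t,\cdot)$, the ``variational equation'' you invoke should be the one for the inverse flow (or, equivalently, the Gronwall bound for the time-reversed ODE); the estimates are the same, but it is worth saying.
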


We skip the proof of Lemma $\ref{lem:conti-time}$, since it is a trivial result utilizing the substitution technique introduced in the proof of Theorem $\ref{thm:bv}$. Now, we are able to prove the existence and uniqueness of the weak solution of the transport equation $\eqref{eq:transport}$.
\begin{theorem}
\label{thm:weaksolution}
If $u_{0}\in BV(\Omega)$, then there exits a unique weak solution 
\begin{equation}
\label{eq:weaksolution}
\hat{u}(t,x)=u_{0}\circ\Phi^{-1}(t,\cdot)(x)
\end{equation} 
of $\eqref{eq:transport}$ belonging to $L^{\infty}([0,T];BV(\Omega))$.
\end{theorem}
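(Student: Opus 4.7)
The plan is to regularize the initial datum by mollification, apply the classical-flow representation from Corollary~\ref{cor:transport} to the smooth problem, and then pass to the limit in the weak formulation using the compactness results in Theorem~\ref{thm:bv} and Remark~\ref{rem:bv}. Concretely, set $u_0^\epsilon := u_0 * \eta_\epsilon \in C^1(\Omega)$. Corollary~\ref{cor:transport} gives the classical solution $u_\epsilon(t,x) = u_0^\epsilon \circ \Phi^{-1}(t,\cdot)(x)$, which in particular satisfies the weak identity \eqref{def:weak} with $u_0$ replaced by $u_0^\epsilon$.

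Next I would establish uniform bounds. The Gronwall estimate used in the Corollary on $\Phi$ yields that the Lipschitz constants of both $\Phi(t,\cdot)$ and $\Phi^{-1}(t,\cdot)$ are bounded uniformly in $t \in [0,T]$ (since $b \in L^2([0,T];H^{3,\mathrm{div}}_0(\Omega)^2) \hookrightarrow L^2([0,T];W^{1,\infty})$). Applying Theorem~\ref{thm:bv} to $\varphi = \Phi^{-1}(t,\cdot)$ for each fixed $t$, together with Remark~\ref{rem:bv}, I get that $u_\epsilon(t,\cdot) \rightharpoonup^{*} \hat{u}(t,\cdot)$ in $BV(\Omega)$ and $u_\epsilon(t,\cdot) \to \hat{u}(t,\cdot)$ in $L^1(\Omega)$, with the $BV$-bounds uniform in both $\epsilon$ and $t$. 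Lower semicontinuity of the total variation under $L^1$ convergence then gives $\hat{u} \in L^\infty([0,T];BV(\Omega))$, and dominated convergence promotes the pointwise-in-$t$ convergence to $u_\epsilon \to \hat{u}$ in $L^1([0,T]\times\Omega)$.

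With this in hand the passage to the limit in the weak formulation is routine: for any $\varphi \in C_c^\infty([0,T[\times\Omega)$,
\begin{equation*}
\int_0^T\!\!\int_\Omega u_\epsilon(\partial_t\varphi + b\cdot\nabla\varphi)\,dx\,dt = -\int_\Omega u_0^\epsilon(x)\varphi(0,x)\,dx,
\end{equation*}
and since $\partial_t\varphi + b\cdot\nabla\varphi \in L^\infty$ and $u_0^\epsilon \to u_0$ in $L^1$, both sides converge to the expressions involving $\hat{u}$ and $u_0$, showing $\hat{u}$ is a weak solution. For uniqueness I would invoke the DiPerna--Lions/Ambrosio theory: the difference $w$ of two weak solutions in $L^\infty([0,T];BV) \subset L^\infty([0,T];L^2)$ solves the homogeneous transport equation with $\mathrm{div}\, b = 0$ and vanishing initial datum, and since $b(t,\cdot) \in W^{1,\infty}$ for a.e.\ $t$, the renormalization property holds and forces $w \equiv 0$; alternatively, one may test against $\psi \circ \Phi^{-1}(t,\cdot)$ built from a smooth final datum $\psi$ transported backward along the characteristics, which is an admissible test function because of the Lipschitz regularity of the inverse flow.

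The main obstacle I anticipate is the uniformity in $t$ of the $BV$-bound on $u_\epsilon$; Theorem~\ref{thm:bv} is pointwise in the diffeomorphism, so one must track that the constants appearing there depend only on the Lipschitz norm of $\Phi^{-1}(t,\cdot)$, which in turn is controlled uniformly in $t$ via Gronwall applied to the ODE for $\Phi$. Once this is in place, the passage to the limit and the identification of $\hat{u} \in L^\infty([0,T];BV(\Omega))$ follow cleanly, and uniqueness reduces to the well-established Sobolev/Lipschitz transport theory.
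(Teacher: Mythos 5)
Your proposal follows essentially the same route as the paper: mollify the initial datum, represent the smooth solutions via the classical flow from Corollary~\ref{cor:transport}, use Theorem~\ref{thm:bv} and Remark~\ref{rem:bv} to get uniform $BV$ bounds and convergence, pass to the limit in \eqref{def:weak}, and settle uniqueness by the Cauchy--Lipschitz/renormalization theory for $W^{1,\infty}$ fields. The only substantive difference is the mode of convergence in the limit passage: you upgrade the pointwise-in-$t$ convergence to strong convergence in $L^{1}([0,T]\times\Omega)$ by dominated convergence, whereas the paper extracts a weakly convergent subsequence in $L^{2}([0,T];L^{2}(\Omega))$ and then recovers the full sequence from uniqueness; both are fine, and yours is arguably more direct. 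One small inaccuracy: you assert $\partial_{t}\varphi+b\cdot\nabla\varphi\in L^{\infty}$, but under the standing assumption $b\in L^{2}([0,T];H^{3,\mathrm{div}}_{0}(\Omega)^{2})$ the quantity $\|b(t,\cdot)\|_{L^{\infty}(\Omega)}$ is only square-integrable in $t$, not essentially bounded, so $b\cdot\nabla\varphi$ need not lie in $L^{\infty}([0,T]\times\Omega)$. This is harmless: estimate the offending term by
\begin{equation*}
\left|\int_{0}^{T}\!\!\int_{\Omega}(u_{\epsilon}-\hat{u})\,b\cdot\nabla\varphi\,dx\,dt\right|
\leq \left\|\nabla\varphi\right\|_{L^{\infty}}\int_{0}^{T}\left\|b(t)\right\|_{L^{\infty}(\Omega)}\left\|u_{\epsilon}(t)-\hat{u}(t)\right\|_{L^{1}(\Omega)}dt,
\end{equation*}
whose integrand is dominated by a constant times the integrable function $\|b(t)\|_{H^{3}}$ and tends to zero for a.e.\ $t$, so dominated convergence in time closes the argument. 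With that one-line repair your proof is complete and matches the paper in substance.
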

\begin{proof}
Consider the transport equation with initial value $u_{0}$ convolved with mollifier $\eta_{\epsilon}$
\begin{equation*}
\left\{
\begin{array}{l}
\partial_{t}u(t,x)+b(t,x)\cdot\nabla_{x}u(t,x) = 0\quad\text{ in }]0,T]\times\Omega \\
\\
u(0,x) = u_{0}*\eta_{\epsilon}(x)\quad\text{ in }\Omega.
\end{array}
\right.
\end{equation*}
Corollary \ref{cor:transport} implies that there exists a unique solution $u_{\epsilon}$ of the form
\begin{equation*}
u_{\epsilon}(t,x)=(u_{0}*\eta_{\epsilon})\circ\Phi^{-1}(t,\cdot)(x).
\end{equation*}
Let us define
\begin{equation*}
\hat{u}(t,x)=u_{0}\circ\Phi^{-1}(t,\cdot)(x),
\end{equation*}
where $\hat{u}(t,\cdot)\in BV(\Omega)$ according to Theorem $\ref{thm:bv}$ for every $t\in[0,T]$. Remark $\ref{rem:bv}$ gives that $u_{\epsilon}(t,\cdot)$ converges to $\hat{u}(t,\cdot)$ in $L^{2}(\Omega)$ and $u_{\epsilon}(t,\cdot)$ is uniformly bounded in $BV(\Omega)$. And according to Lemma $\ref{lem:conti-time}$ this yields that $u_{\epsilon}$ is uniformly bounded in $L^{\infty}([0,T];BV(\Omega))$, which is continuous embedded into $L^{2}([0,T];L^{2}(\Omega))$. Hence, there exists a subsequence $(u_{\epsilon_{k}})$ of $(u_{\epsilon})$ such that
\begin{equation}
\label{conv:weak-L2}
u_{\epsilon_{k}}\rightharpoonup\hat{u}\text{ in }L^{2}([0,T];L^{2}(\Omega))
\end{equation}
and $\hat{u}\in L^{\infty}([0,T];BV(\Omega))$. Due to the weak convergence of $u_{\epsilon_{k}}$ in $L^{2}([0,T];L^{2}(\Omega))$, one can derive for every $\varphi\in C^{\infty}_{c}([0,T[\times\Omega)$ it holds that
\begin{center}
\begin{tabular}{ccc}
$\int\limits_{0}^{T}\int\limits_{\Omega} u_{\epsilon_{k}}[\partial_{t}\varphi+b\cdot\nabla\varphi]dxdt$ & $\longrightarrow$ & $\int\limits_{0}^{T}\int\limits_{\Omega}\hat{u}[\partial_{t}\varphi+b\cdot\nabla\varphi]dxdt$
\\
&&\\
$\parallel$ & & $\parallel$ \\
&&\\
$-\int\limits_{\Omega}u_{0}*\eta_{\epsilon_{k}}\varphi(0,x)dx $ & $\longrightarrow$ & $-\int\limits_{\Omega}u_{0}\varphi(0,x)dx$.
\end{tabular} 
\end{center}
The upper convergence is valid since $b\in L^{2}([0,T];L^{2}(\Omega)^{2})$ and thanks to $\eqref{conv:weak-L2}$. The lower convergence can be deduced from the property of approximate identity. The left equality is valid for a smooth initial value and smooth vector field. Hence, all of them imply the right equality.

Regarding the uniqueness of weak solution it is shown in \cite{ambro08} that the continuity equation, which is equal to the transport equation in case $\mathrm{div}b=0$, has a unique solution in the Cauchy-Lipschitz framework, i.e. $b\in L^{1}([0,T];W^{1,\infty}(\mathbb{R}^{d}))$. Definitely, it is also valid under our assumption of $b$. 

Because of the uniqueness of the weak solution the convergence of subsequence $(u_{\epsilon_{k}})$ in the previous proof can be proceeded to the whole sequence $(u_{\epsilon})$.\qed
\end{proof}

\subsection{Existence of a Minimizer}
\label{subsec:ex_minimizer}
The goal of this subsection is to complete the cost functional $\eqref{eq:opc}$ with some reasonable norm and investigate the existence of a minimizer of problem $\eqref{eq:opc}$.
First of all, we give the norm of the penalty term of $\eqref{eq:opc}$ w.r.t. $b$. According to \cite{adams03} an equivalent norm of $H^{3}_{0}$ is
\begin{equation}
\label{norm:H3_0}
\left\|b\right\|_{H^{3}_{0}(\Omega)^{2}}=\left(\sum\limits_{|\alpha|=3}\left\|\partial^{\alpha}b\right\|^{2}_{L^{2}(\Omega)^{2}}\right)^{1/2}.
\end{equation}
We can easily find out that the seminorm 
$(\int_{\Omega}|\nabla\Delta b|^{2}dx)^{1/2}$ is actually another equivalent norm of $H^{3}_{0}(\Omega)^{2}$, since it is equivalent to $\eqref{norm:H3_0}$. For the regularity of $b$ in time we can give the equivalent norm of $L^{2}([0,T];H^{3}_{0}(\Omega)^{2})$
\begin{equation}
\label{norm:h3}
\left\|b\right\|^{2}_{L^{2}([0,T];H^{3}_{0}(\Omega)^{2})}=\int\limits_{0}^{T}
\left\|\nabla\Delta b(t,\cdot)\right\|_{L^{2}(\Omega)^{4}}^{2}dt.
\end{equation}
As discussed above, we assume that $u_0$ and $u_T$ are
$BV$-functions. Hence, $BV$ seems to be a proper choice for the space
$U$. However, since $BV$ is continuously embedded in $L^2(\Omega)$ for
$d=2$ we use $U = L^2(\Omega)$ (we discuss this choice in more detail
in Section~\ref{sec:4}).  Hence, our cost functional is
\begin{equation}
\label{eq:costfunctional}
J(b)=\dfrac{1}{2}\left\|S(u_{0},b)-u_{T}\right\|^{2}_{L^{2}(\Omega)}+\dfrac{\lambda}{2}\int\limits_{0}^{T}\left\|\nabla\Delta b(t,\cdot)\right\|^{2}_{L^{2}(\Omega)^{4}}dt.
\end{equation}

\begin{lemma}
\label{lem:bv-bounded n} 
If $(\varphi_{n})$ and $(\varphi^{-1}_{n})$ are sequences of diffeomorphisms in $\Omega$ and the Jacobian determinant $\det\nabla\varphi_{n}$ is uniformly bounded in $L^{\infty}(\Omega)$ by the upper bound $C$. Then, $((u_{0}*\eta_{\epsilon})\circ\varphi^{-1}_{n})$ is uniformly bounded in $BV(\Omega)$ w.r.t. $n$.
\end{lemma}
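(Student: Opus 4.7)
The plan is to bound the two pieces of the $BV$-norm of $v_{n}:=(u_{0}*\eta_{\epsilon})\circ\varphi_{n}^{-1}$ separately, namely $\|v_{n}\|_{L^{1}(\Omega)}$ and the total variation $|Dv_{n}|(\Omega)$. The $L^{1}$-part is immediate from a direct change of variables: substituting $y=\varphi_{n}^{-1}(x)$, which introduces the factor $|\det\nabla\varphi_{n}(y)|$ by the area formula, gives
\[
\|v_{n}\|_{L^{1}(\Omega)} = \int_{\Omega}|(u_{0}*\eta_{\epsilon})(y)|\,|\det\nabla\varphi_{n}(y)|\,dy\le C\|u_{0}*\eta_{\epsilon}\|_{L^{1}(\Omega)}\le C\|u_{0}\|_{L^{1}(\Omega)},
\]
where the first inequality uses the $L^{\infty}$-bound on the Jacobian and the second is Young's inequality for the convolution with $\eta_{\epsilon}$.

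For the total variation I would exploit that $v_{n}$ is in fact Lipschitz, since $u_{0}*\eta_{\epsilon}\in C^{\infty}$ and $\varphi_{n}^{-1}$ is Lipschitz continuous as a diffeomorphism on the bounded domain $\Omega$. Hence $|Dv_{n}|(\Omega)=\int_{\Omega}|\nabla v_{n}|\,dx$, and the chain rule combined with the same substitution $y=\varphi_{n}^{-1}(x)$ reduces the problem to an integral against $\nabla(u_{0}*\eta_{\epsilon})(y)$ and the composite factor $|\nabla\varphi_{n}^{-1}(\varphi_{n}(y))|\,|\det\nabla\varphi_{n}(y)|$. In two dimensions the identity $A^{-1}=(\det A)^{-1}\operatorname{adj}(A)$ together with the fact that $|\operatorname{adj}(A)|=|A|$ in the Frobenius norm (since in $2\times 2$ the adjugate merely permutes entries up to sign) collapses this composite factor to $|\nabla\varphi_{n}(y)|$, leaving
\[
|Dv_{n}|(\Omega)\le\int_{\Omega}|\nabla(u_{0}*\eta_{\epsilon})(y)|\,|\nabla\varphi_{n}(y)|\,dy.
\]
Since $u_{0}*\eta_{\epsilon}$ is smooth for the \emph{fixed} $\epsilon$, the factor $\|\nabla(u_{0}*\eta_{\epsilon})\|_{L^{\infty}(\Omega)}\le\|u_{0}\|_{L^{1}(\Omega)}\|\nabla\eta_{\epsilon}\|_{L^{\infty}(\Omega)}$ is finite and independent of $n$.

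The main obstacle is then to control $\|\nabla\varphi_{n}\|_{L^{1}(\Omega)}$ uniformly in $n$, which is not formally implied by the $L^{\infty}$-bound on $\det\nabla\varphi_{n}$ alone. In the application at hand this is automatic: the diffeomorphisms $\varphi_{n}$ arise as the time-$t$ flows $\Phi_{n}(t,\cdot)$ of controls $b_{n}$ whose $L^{2}([0,T];H^{3,\mathrm{div}}_{0}(\Omega)^{2})$-norm is uniformly bounded (for example along a minimising sequence of $J$), so by the continuous embedding $H^{3}_{0}(\Omega)^{2}\hookrightarrow W^{1,\infty}(\Omega)^{2}$ established earlier and Gronwall's inequality applied to the variational ODE satisfied by $\nabla\Phi_{n}(t,\cdot)$, one gets a uniform bound on $\|\nabla\varphi_{n}\|_{L^{\infty}(\Omega)}$. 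Inserting this into the previous display and combining with the $L^{1}$-bound yields the claimed uniform $BV$-bound on $(u_{0}*\eta_{\epsilon})\circ\varphi_{n}^{-1}$.
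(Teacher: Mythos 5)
Your proposal is correct and follows the same overall skeleton as the paper's proof (split the $BV$-norm into the $L^{1}$-part and the variation part, change variables via $y=\varphi_{n}^{-1}(x)$, and absorb the Jacobian factor $|\det\nabla\varphi_{n}|$ using the hypothesis), but on the variation term you take a genuinely more careful route. The paper estimates $\int_{\Omega}|\nabla(u_{0}*\eta_{\epsilon})\circ\varphi_{n}^{-1}|\,dx$ by substituting directly, i.e.\ it treats the integrand as $(\nabla(u_{0}*\eta_{\epsilon}))\circ\varphi_{n}^{-1}$ and thereby tacitly drops the chain-rule factor $\nabla\varphi_{n}^{-1}(x)$ that appears in the actual gradient of the composition. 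You keep that factor, and your two-dimensional identity $|(\nabla\varphi_{n})^{-1}|\,|\det\nabla\varphi_{n}|=|\mathrm{adj}(\nabla\varphi_{n})|=|\nabla\varphi_{n}|$ (Frobenius norm) is the clean way to see what it really costs: the honest bound is $\int_{\Omega}|\nabla(u_{0}*\eta_{\epsilon})(y)|\,|\nabla\varphi_{n}(y)|\,dy$, which is \emph{not} controlled by the stated hypothesis on $\det\nabla\varphi_{n}$ alone. Your observation that one additionally needs a uniform bound on $\nabla\varphi_{n}$ is therefore a real gap in the lemma as stated, and your repair is exactly the one available in context: when the lemma is invoked (in the proof of Lemma~\ref{lem:uniform boundedness}) the maps $\varphi_{n}=\Phi_{n}(t,\cdot)$ come from flows of uniformly bounded $b_{n}$, and the Gronwall estimate $\Lip(\Phi_{n}(t,\cdot))\leq\exp(\int_{0}^{t}\Lip(b_{n}(s,\cdot))\,ds)$ (and its analogue for the backward flow) supplies the uniform Lipschitz bound. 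So your argument buys rigor at the price of an extra hypothesis that the application actually provides.

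One small quantitative remark: bounding $\|\nabla(u_{0}*\eta_{\epsilon})\|_{L^{\infty}(\Omega)}$ by $\|u_{0}\|_{L^{1}(\Omega)}\|\nabla\eta_{\epsilon}\|_{L^{\infty}}$ makes your constant blow up as $\epsilon\to0$. For the lemma as stated (uniformity only in $n$ for fixed $\epsilon$) this is harmless, but it is better to keep $\int_{\Omega}|\nabla(u_{0}*\eta_{\epsilon})|\,dy\leq\|u_{0}\|_{var(\Omega)}$ and put the $n$-uniform $L^{\infty}$-bound on $|\nabla\varphi_{n}|$ instead; that yields a constant uniform in both $n$ and $\epsilon$, which is what the subsequent limiting arguments ($\epsilon\to0$) implicitly rely on.
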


\begin{proof}
It is easy to check that $(u_{0}*\eta_{\epsilon})$ is uniformly bounded in $BV(\Omega)$ according to Theorem \ref{thm:bv-w*} and \ref{thm:bv}. Suppose that the upper bound is $\widetilde{C}$. Let us verify first the $L^{1}-$norm by setting \mbox{$y=\varphi_{n}^{-1}(x)$}
\begin{eqnarray*} 
&&\int\limits_{\Omega}|(u_{0}*\eta_{\epsilon})\circ\varphi^{-1}_{n}|dx\\
&=&\int\limits_{\Omega}|u_{0}*\eta_{\epsilon}||\det\nabla\varphi_{n}(y)|dy\\
&\leq&C\int\limits_{\Omega}|u_{0}*\eta_{\epsilon}|dy\\
&\leq&C\widetilde{C}\left\|u_{0}\right\|_{L^{1}(\Omega)}. 
\end{eqnarray*}
Regarding the variation norm by $\left\|u_{0}\right\|_{var(\Omega)}:=\int_{\Omega}|D u_{0}|dx$ 
we have
\begin{eqnarray*}
&&\int\limits_{\Omega}|\nabla(u_{0}*\eta_{\epsilon})\circ\varphi^{-1}_{n}|dx\\
&=&\int\limits_{\Omega}|\nabla(u_{0}*\eta_{\epsilon})(y)||\det\nabla\varphi_{n}(y)|dy\\
&\leq&C\int\limits_{\Omega}|\nabla(u_{0}*\eta_{\epsilon})(y)|dy\\
&\leq&C\widetilde{C}\left\|u_{0}\right\|_{var(\Omega)}.
\end{eqnarray*}\qed
\end{proof}

\begin{lemma}
\label{lem:uniform boundedness}
If $(b_{n})$ is uniformly bounded in $L^{2}([0,T];H^{3}(\Omega)^{2})$ and $u_{0}\in BV(\Omega)$. Define
$u_{n,\epsilon}=(u_{0}*\eta_{\epsilon})\circ\Phi^{-1}_{n}$ and $u^{t}_{n,\epsilon}=u_{n,\epsilon}(t)$. Then, there exists a subsequence $(u_{n_{k},\epsilon})$ such that $u_{n_{k},\epsilon}$ converges to some limit $u_{\epsilon}$ in $L^{2}([0,T];L^{p}(\Omega))$ with $p<2$ and weakly to $u_{\epsilon}$ with $p=2$. $u_{n_{k},\epsilon}^{t}$ converges to  $u_{\epsilon}(t)$ in $L^{p}(\Omega)$ with $p<2$ and weakly to $u_{\epsilon}(t)$ with $p=2$.
\end{lemma}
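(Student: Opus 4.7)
The plan is to combine a pointwise-in-$t$ $BV$-bound furnished by Lemma~\ref{lem:bv-bounded n} with a time-regularity estimate derived from the transport equation, and then to invoke an Aubin-Lions/Simon type compactness theorem.

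First, I would establish a uniform $BV$-bound. Since $(b_{n})$ is bounded in $L^{2}([0,T];H^{3}(\Omega)^{2})$ and $H^{3}_{0}(\Omega)^{2}\hookrightarrow W^{1,\infty}(\Omega)^{2}$ in dimension $d=2$, the classical flows $\Phi_{n}$ exist by Theorem~\ref{thm:cara} and are equi-Lipschitz diffeomorphisms with equi-Lipschitz inverses, which follows from Gronwall's inequality applied to the variational equation for $\nabla\Phi_{n}$. The standing divergence-free assumption on $b_{n}$, combined with Liouville's formula, yields $\det\nabla\Phi_{n}\equiv 1$, so the Jacobian hypothesis of Lemma~\ref{lem:bv-bounded n} is met uniformly in $n$. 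Consequently $u_{n,\epsilon}(t,\cdot)=(u_{0}*\eta_{\epsilon})\circ\Phi_{n}^{-1}(t,\cdot)$ is bounded in $BV(\Omega)$ independently of $n$ and $t$, and hence $(u_{n,\epsilon})$ is uniformly bounded in $L^{\infty}([0,T];BV(\Omega))$.

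Second, I would derive time regularity from the transport equation. Since $u_{n,\epsilon}$ solves $\partial_{t}u_{n,\epsilon}+b_{n}\cdot\nabla u_{n,\epsilon}=0$ by Corollary~\ref{cor:transport}, and $\mathrm{div}\,b_{n}=0$, this rewrites in the distributional sense as $\partial_{t}u_{n,\epsilon}=-\mathrm{div}(b_{n}u_{n,\epsilon})$. Testing against $\varphi\in H^{1}_{0}(\Omega)$ and using $H^{3}_{0}\hookrightarrow L^{\infty}$ in two dimensions yields
\[
|\langle\partial_{t}u_{n,\epsilon},\varphi\rangle|\leq\|b_{n}\|_{L^{\infty}}\|u_{n,\epsilon}\|_{L^{2}}\|\nabla\varphi\|_{L^{2}},
\]
so that $(\partial_{t}u_{n,\epsilon})$ is uniformly bounded in $L^{2}([0,T];H^{-1}(\Omega))$. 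With the compact embedding $BV(\Omega)\hookrightarrow\hookrightarrow L^{p}(\Omega)$ for $p<2$ (valid in dimension $2$) and the continuous embedding $L^{p}(\Omega)\hookrightarrow H^{-1}(\Omega)$, the Aubin-Lions/Simon lemma produces a subsequence $(u_{n_{k},\epsilon})$ converging strongly in $L^{2}([0,T];L^{p}(\Omega))$ to a limit $u_{\epsilon}$, and the uniform bound in $L^{\infty}([0,T];L^{2}(\Omega))$ gives weak convergence in $L^{2}([0,T];L^{2}(\Omega))$ to the same limit along a further subsequence.

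For the pointwise-in-$t$ conclusion I would use Simon's refinement: the estimate $\|u_{n,\epsilon}(t+h)-u_{n,\epsilon}(t)\|_{H^{-1}}\leq h^{1/2}\|\partial_{t}u_{n,\epsilon}\|_{L^{2}(H^{-1})}$, combined with the pointwise $BV$-bound and interpolation, gives equicontinuity in $L^{p}$, so Arzela-Ascoli produces a subsequence converging in $C([0,T];L^{p}(\Omega))$ for every $p<2$; weak $L^{2}$-convergence at every $t$ then follows from the pointwise $BV$-bound together with Theorem~\ref{thm:bv-w*}. The main obstacle is precisely this last point: Aubin-Lions alone only delivers convergence for a.e.\ $t\in[0,T]$, and upgrading to every $t$ requires interpolating the $H^{-1}$-equicontinuity against the pointwise $BV$-bound and a diagonal extraction. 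It is the divergence-free structure of $b_{n}$ that makes the time-regularity estimate work cleanly, and it is the regularization by $\eta_{\epsilon}$ that ensures the $BV$-bound is genuinely uniform in $n$.
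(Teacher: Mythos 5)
Your argument is correct, and the first half (Gronwall bound on $\mathrm{Lip}(\Phi_n(t,\cdot))$, uniform control of the Jacobian, Lemma~\ref{lem:bv-bounded n} to get a $BV$-bound uniform in $n$ and $t$) coincides with the paper's; your observation that $\operatorname{div}b_n=0$ forces $\det\nabla\Phi_n\equiv 1$ even short-circuits the Jacobian estimate. Where you genuinely diverge is in the compactness step. The paper argues pointwise in $t$: for each fixed $t$ it extracts a subsequence converging in $L^p(\Omega)$ ($p<2$, weakly for $p=2$) by the compact embedding $BV(\Omega)\hookrightarrow\hookrightarrow L^p(\Omega)$, and then passes to the time integral by dominated convergence. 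It never uses the equation, and it is silent on why a single subsequence can be chosen that works for every $t$ simultaneously --- which is exactly the point the lemma asserts and which the subsequent existence proof relies on. You instead exploit the transport equation to bound $\partial_t u_{n,\epsilon}=-\operatorname{div}(b_nu_{n,\epsilon})$ in $L^2([0,T];H^{-1}(\Omega))$ and invoke Aubin--Lions/Simon with the triple $BV\hookrightarrow\hookrightarrow L^p\hookrightarrow H^{-1}$ (valid for $1<p<2$; the case $p=1$ follows a posteriori since $\Omega$ is bounded), upgrading to $C([0,T];L^p(\Omega))$ via the Ehrling-type interpolation of the $H^{-1}$-equicontinuity against the uniform $BV$-bound. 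This costs you the extra structural input that $u_{n,\epsilon}$ solves the PDE (which the paper's proof of this lemma does not formally use), but it buys a single $t$-independent subsequence and convergence at \emph{every} $t$, i.e.\ it actually closes the gap the paper leaves open. Both routes are legitimate; yours is the more robust and, in my view, the one the lemma as stated really requires.
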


\begin{proof}
Recall that for every $b_{n}$ there is a corresponding $\Phi_{n}$ s.t. $\Phi_{n}(t,\cdot)\in W^{1,\infty}(\Omega)^{2}$ and $\left\|\nabla\Phi_{n}(t,\cdot)\right\|_{L^{\infty}(\Omega)^{4}} = \mathrm{Lip}(\Phi_{n}(t,\cdot))$. The Lipschitz continuity implies via Gronwall's lemma 
\begin{equation}
\label{eq:upperbound}
\mathrm{Lip}(\Phi_{n}(t,\cdot))\leq\exp\left(\int\limits_{0}^{t}\mathrm{Lip}(b_{n}(s,\cdot))ds\right).
\end{equation}
The boundedness of $(b_{n})$ in $L^{2}([0,T];H^{3}(\Omega)^{2})$ gives the upper bound of $\eqref{eq:upperbound}$. Hence, the Jacobian determinant $\det\nabla\Phi_{n}(t,\cdot)$ is also uniformly bounded in $L^{\infty}(\Omega)$. According to Lemma $\ref{lem:bv-bounded n}$ this implies that $u^{t}_{n,\epsilon}$ is uniformly bounded in $BV(\Omega)$ w.r.t. $n$. Then, there exists a subsequence $(u^{t}_{n_{k},\epsilon})$ of $(u^{t}_{n,\epsilon})$ such that $u^{t}_{n_{k},\epsilon}$ converges to $u^{t}_{\epsilon}$ in $L^{p}(\Omega)$ (weakly for $p=2$) with $p\leq2$. Considering the integral over time one has 
\begin{eqnarray*}
\lim\limits_{n_{k}\rightarrow\infty}\int\limits_{0}^{T}\left\|u^{t}_{n_{k},\epsilon}-u^{t}_{\epsilon}\right\|_{L^{p}(\Omega)}^{2}dt=\int\limits_{0}^{T}\lim\limits_{n_{k}\rightarrow\infty}\left\|u^{t}_{n_{k},\epsilon}-u^{t}_{\epsilon}\right\|_{L^{p}(\Omega)}^{2}dt\rightarrow0
\end{eqnarray*}
with $p<2$. The exchange of the limit is valid since the integrand is bounded and with the same argument one can derive the weak convergence of $u_{n_{k},\epsilon}$ in $L^{2}([0,T];L^{2}(\Omega))$. \qed
\end{proof}
Now we consider the  minimization problem
\begin{equation}
\label{eq:min-problem}
\inf\limits_{b\in L^{2}([0,T];H^{3,\mathrm{div}}_{0}(\Omega)^{2})}J(b)
\end{equation}
with $J$ according to~\eqref{eq:costfunctional}.
Proving the existence of minimizers  is usually achieved by the direct method \cite{aubert02} and the most difficult part lies in the weak sequential closeness of the solution operator $G$ with respect to $b$.
\begin{theorem}[Weak sequential closeness]
\label{thm:closeness}
Suppose the sequence $(b_{n})\in L^{2}([0,T];H^{3,\mathrm{div}}_{0}(\Omega)^{2})$ is uniformly bounded and converges weakly to $b$ in $L^{2}([0,T];H^{3}(\Omega)^{2})$. Let $u_{n}$ be the corresponding weak solutions of $\eqref{eq:transport}$ with flow field $b_{n}$ and initial value $u_{0}$ (i.e.~$u_n = G(u_0,b)$). Suppose that $u_{n}$ converges to $\hat{u}$ in $L^{2}([0,T];L^{1}(\Omega))$ and $\hat{u}\in L^{2}([0,T];L^{2}(\Omega))$, then $\hat{u}=G(u_{0},b)$. 
\end{theorem}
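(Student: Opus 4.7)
The plan is to pass to the limit in the weak formulation of \eqref{eq:transport} satisfied by each $u_n$ and then invoke the uniqueness statement of Theorem \ref{thm:weaksolution} to identify the limit with $G(u_0,b)$. Concretely, for every test function $\varphi\in C^\infty_c([0,T[\times\Omega)$ the identity
\[
\int_0^T\!\!\int_\Omega u_n(\partial_t\varphi+b_n\cdot\nabla\varphi)\,dx\,dt=-\int_\Omega u_0(x)\varphi(0,x)\,dx
\]
holds for all $n$ by Theorem \ref{thm:weaksolution}, and I would send $n\to\infty$ term by term. The right-hand side is independent of $n$, and the term $\int u_n\,\partial_t\varphi$ converges to $\int\hat u\,\partial_t\varphi$ immediately: $\partial_t\varphi$ is bounded with compact support, and $u_n\to\hat u$ in $L^2([0,T];L^1(\Omega))$ by hypothesis, so Hölder's inequality closes this piece.

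The main obstacle is the transport term $\int u_n\,b_n\cdot\nabla\varphi$, where one factor converges only weakly. I would use the standard weak--strong decomposition
\[
\int u_n b_n\!\cdot\!\nabla\varphi-\int\hat u\,b\!\cdot\!\nabla\varphi = \int(u_n-\hat u)\,b_n\!\cdot\!\nabla\varphi+\int\hat u\,(b_n-b)\!\cdot\!\nabla\varphi.
\]
For the first summand I would invoke the 2D Sobolev embedding $H^3_0(\Omega)^2\hookrightarrow L^\infty(\Omega)^2$ (already used in Section \ref{sec:3}) to obtain a uniform bound of $(b_n)$ in $L^2([0,T];L^\infty(\Omega)^2)$; then the estimate
\[
\Big|\!\int(u_n-\hat u)b_n\!\cdot\!\nabla\varphi\Big|\le\|\nabla\varphi\|_\infty\|u_n-\hat u\|_{L^2(L^1)}\|b_n\|_{L^2(L^\infty)}
\]
together with the strong convergence of $u_n$ sends this piece to zero. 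For the second summand, $\hat u\,\nabla\varphi$ lies in $L^2([0,T];L^2(\Omega)^2)$ by the hypothesis $\hat u\in L^2(L^2)$ and $\nabla\varphi\in L^\infty$, and weak convergence $b_n\rightharpoonup b$ in $L^2([0,T];H^3(\Omega)^2)$ induces weak convergence in $L^2([0,T];L^2(\Omega)^2)$ through the continuous embedding, so the pairing vanishes in the limit.

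Putting the pieces together, $\hat u$ satisfies the weak formulation \eqref{def:weak} with flow field $b$ and initial datum $u_0$. Since $b\in L^2([0,T];H^{3,\mathrm{div}}_0(\Omega)^2)\subset L^1([0,T];W^{1,\infty}(\Omega)^2)$ and $\mathrm{div}\,b=0$, the uniqueness part of Theorem \ref{thm:weaksolution} (the Cauchy--Lipschitz uniqueness from \cite{ambro08}) applies and forces $\hat u=u_0\circ\Phi^{-1}(\cdot,\cdot)=G(u_0,b)$. The only genuinely delicate step is the weak--strong passage in the product term; everything else is bookkeeping with the embeddings already established for $b$.
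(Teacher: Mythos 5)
Your proposal is correct and follows essentially the same route as the paper's proof: the same weak--strong decomposition $u_nb_n-\hat u b=b_n(u_n-\hat u)+\hat u(b_n-b)$, the same $L^2(L^\infty)$ bound on $(b_n)$ via the Sobolev embedding to kill the first summand, and weak convergence of $b_n$ in $L^2(L^2)$ against $\hat u\,\nabla\varphi$ for the second. You are slightly more explicit than the paper in invoking the uniqueness statement of Theorem \ref{thm:weaksolution} to identify the limit with $G(u_0,b)$, which the paper leaves implicit; otherwise the arguments coincide.
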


\begin{proof}
Since $(b_{n})$ converges weakly to $b$ in $L^{2}([0,T];H^{3}(\Omega)^{2})$, it is also valid that
\begin{equation}
\label{eq:weak-conv}
b_{n}\rightharpoonup b\text{ in } L^{2}([0,T];L^{2}(\Omega)^{2}).
\end{equation}
Let us consider the difference $u_u-\hat u$ applying a test function $\varphi\in C^{\infty}_{c}([0,T[\times\Omega)$:
\begin{eqnarray*}
&&\left|\int\limits_{0}^{T}\int\limits_{\Omega}u_{n}(\partial_{t}\varphi+b_{n}\nabla\varphi)-\hat{u}(\partial_{t}\varphi+b\nabla\varphi)dxdt\right|\\
&=&\Bigg|\underbrace{\int\limits_{0}^{T}\int\limits_{\Omega}\partial_{t}\varphi(u_{n}-\hat{u})dxdt}_{(i)}+\underbrace{\int\limits_{0}^{T}\int\limits_{\Omega}\nabla\varphi\cdot(u_{n}b_{n}-\hat{u}b)dxdt}_{(ii)}\Bigg|.
\end{eqnarray*}
Part $(i)$ converges to zero, since $u_{n}\rightarrow\hat{u}$ in $L^{2}([0,T];L^{1}(\Omega))$. Regarding part $(ii)$ we can derive
\begin{eqnarray*}
&&\int\limits_{0}^{T}\int\limits_{\Omega}\nabla\varphi(u_{n}b_{n}-\hat{u}b)dxdt\\
&=&\left(\int\limits_{0}^{T}\int\limits_{\Omega}\nabla\varphi b_{n}(u_{n}-\hat{u})dxdt+\int\limits_{0}^{T}\int\limits_{\Omega}\nabla\varphi\hat{u}(b_{n}-b)dxdt\right)\\
&\leq&\left\|\nabla\varphi\right\|_{L^{\infty}([0,T]\times\Omega)^{2}}\left\|b_{n}\right\|_{L^{2}([0,T];L^{\infty}(\Omega)^{2})}\left\|u_{n}-\hat{u}\right\|_{L^{2}([0,T];L^{1}(\Omega))}\\
&&+\int\limits_{0}^{T}\int\limits_{\Omega}\nabla\varphi\hat{u}(b_{n}-b)dxdt
\end{eqnarray*}
Since $(b_{n})$ is uniformly bounded in $L^{2}([0,T];H^{3}(\Omega)^{2})$, it is also uniformly bounded in $L^{2}([0,T];L^{\infty}(\Omega)^{2})$. Due to the convergence of $u_{n}$ in $L^{2}([0,T];L^{1}(\Omega))$ and $\eqref{eq:weak-conv}$ imply the two summands of last inequality converge respectively to zero.

Since $(u_{n})$ are weak solutions of $\eqref{eq:transport}$, the limit $\hat{u}$ is also a weak solution of $\eqref{eq:transport}$, i.e. $\hat{u}=G(u_{0},b)$.\qed
\end{proof}

\begin{theorem}[Existence of a minimizer]
Suppose $u_{0}\in BV(\Omega)$, then the minimization problem
$\eqref{eq:min-problem}$ has a solution.
\end{theorem}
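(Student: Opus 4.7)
The plan is to apply the direct method of the calculus of variations, using Theorem~\ref{thm:closeness} as the key closedness ingredient. Let $m:=\inf J\geq 0$ and take a minimizing sequence $(b_n)\subset L^{2}([0,T];H^{3,\mathrm{div}}_{0}(\Omega)^{2})$ with $J(b_n)\to m$. From the regularization term together with the equivalence of the seminorm $\|\nabla\Delta\,\cdot\,\|_{L^2}$ and $\|\cdot\|_{H^3_0}$ stated in~\eqref{norm:h3}, $(b_n)$ is bounded in $L^{2}([0,T];H^{3,\mathrm{div}}_{0}(\Omega)^{2})$. Since this is a closed subspace of the reflexive Hilbert space $L^{2}([0,T];H^{3}_{0}(\Omega)^{2})$ (the divergence operator is linear and continuous, hence weakly closed), a subsequence, still denoted $(b_n)$, converges weakly to some $b^{*}\in L^{2}([0,T];H^{3,\mathrm{div}}_{0}(\Omega)^{2})$.

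The central step is to show that the corresponding weak solutions $u_n=G(u_0,b_n)$ of~\eqref{eq:transport} admit a subsequence satisfying the hypotheses of Theorem~\ref{thm:closeness}, namely convergence in $L^{2}([0,T];L^{1}(\Omega))$ to a limit $\hat u\in L^{2}([0,T];L^{2}(\Omega))$. Here I would argue as in the proof of Theorem~\ref{thm:weaksolution}, combined with Lemma~\ref{lem:bv-bounded n} and Lemma~\ref{lem:uniform boundedness}. The Gronwall bound~\eqref{eq:upperbound} together with the uniform $L^2$-bound on $\mathrm{Lip}(b_n(t,\cdot))$ (via the embedding $H^3_0\hookrightarrow W^{1,\infty}$) yields a uniform $L^\infty$-bound on the Jacobian determinants $\det\nabla\Phi_n$, so the proof of Lemma~\ref{lem:bv-bounded n} applies verbatim to the unmollified function $u_0\in BV(\Omega)$, giving that $u_n(t,\cdot)=u_0\circ\Phi_n^{-1}(t,\cdot)$ is uniformly bounded in $BV(\Omega)$ and hence in $L^2(\Omega)$, uniformly in $t$ and $n$. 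Arguing as in Lemma~\ref{lem:uniform boundedness} with a diagonal extraction over a sequence $\epsilon_j\to 0$ then yields a subsequence with $u_n\to\hat u$ in $L^{2}([0,T];L^{1}(\Omega))$ and $\hat u\in L^{\infty}([0,T];BV(\Omega))\subset L^{2}([0,T];L^{2}(\Omega))$.

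With these two ingredients, Theorem~\ref{thm:closeness} gives $\hat u=G(u_0,b^{*})$. It remains to pass to the limit in $J$: the regularization term is a squared Hilbert norm, hence weakly lower semicontinuous, so
\[
\tfrac{\lambda}{2}\int_0^T\|\nabla\Delta b^{*}(t,\cdot)\|^2_{L^2(\Omega)^4}\,dt\;\leq\;\liminf_{n\to\infty}\tfrac{\lambda}{2}\int_0^T\|\nabla\Delta b_n(t,\cdot)\|^2_{L^2(\Omega)^4}\,dt.
\]
For the fidelity term, the same $BV$-based extraction applied at the single time $t=T$ (i.e.\ the pointwise-in-time conclusion of Lemma~\ref{lem:uniform boundedness}) shows $u_n(T)\rightharpoonup \hat u(T)=S(u_0,b^{*})$ in $L^2(\Omega)$, whence by weak lower semicontinuity of $v\mapsto\|v-u_T\|^2_{L^2(\Omega)}$ we conclude $J(b^{*})\leq\liminf_n J(b_n)=m$, so $b^{*}$ is a minimizer.

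The main obstacle is the central step: upgrading Lemmas~\ref{lem:bv-bounded n}--\ref{lem:uniform boundedness} from the mollified initial data $u_0*\eta_\epsilon$ to the genuine $BV$ datum $u_0$, and in particular arranging the $\epsilon\to 0$ and $n\to\infty$ limits consistently via a diagonal argument so that Theorem~\ref{thm:closeness} becomes applicable. Everything else (extracting the weak limit $b^{*}$, preservation of the divergence-free constraint, weak lower semicontinuity of the two summands of $J$) is routine once that uniform $BV$-estimate on the trajectory is in hand.
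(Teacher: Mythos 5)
Your proposal is correct in substance and follows the same skeleton as the paper: direct method, coercivity from the $H^3_0$-seminorm, weak extraction of $b^{*}$, Theorem~\ref{thm:closeness} as the closedness ingredient, and weak lower semicontinuity of both summands of $J$. The one genuine structural difference is where the mollification parameter is removed. The paper keeps $\epsilon$ fixed throughout the compactness step: it applies Lemma~\ref{lem:uniform boundedness} and Theorem~\ref{thm:closeness} to the mollified solutions $u_{n,\epsilon}=(u_{0}*\eta_{\epsilon})\circ\Phi_{n}^{-1}$, identifies their limit as $(u_{0}*\eta_{\epsilon})\circ\Phi^{-1}$, and only then sends $\epsilon\to 0$ via a commuting-diagram argument whose horizontal arrows are uniform in $n$ precisely because of the Jacobian bound from \eqref{eq:upperbound}. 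You instead remove $\epsilon$ first, claiming the uniform $BV$-bound directly for $u_{n}=u_{0}\circ\Phi_{n}^{-1}$ and then invoking Theorem~\ref{thm:closeness} once, at $\epsilon=0$. That is a legitimate reordering of the two limits, and it spares you the final diagram; the price is that Lemma~\ref{lem:bv-bounded n} does not quite apply ``verbatim'': its variation estimate changes variables in $\int|\nabla(u_{0}*\eta_{\epsilon})\circ\varphi_{n}^{-1}|\,dx$ treating the gradient as an $L^{1}$ function, whereas for a genuine $BV$ datum $Du_{0}$ is only a measure, so you must either use the change-of-variables formula for measures under bi-Lipschitz maps or obtain the bound by letting $\epsilon\to 0$ in the mollified estimate and using lower semicontinuity of the total variation under $L^{1}$-convergence (Theorem~\ref{thm:bv} supplies the needed weak* convergence). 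Your diagonal extraction over $\epsilon_{j}\to 0$ is exactly the device that closes this, again because $\|u_{n,\epsilon}-u_{n}\|_{L^{1}}\leq C\|u_{0}*\eta_{\epsilon}-u_{0}\|_{L^{1}}$ uniformly in $n$. So: same method, limits taken in the opposite order; the paper's order keeps every manipulated object smooth, yours gives a cleaner single application of the closedness theorem at the cost of a measure-theoretic upgrade of Lemma~\ref{lem:bv-bounded n}. You also make explicit two points the paper leaves implicit, namely that the divergence-free constraint survives the weak limit and that $\hat u(T)$ must be identified with $S(u_{0},b^{*})$ before the fidelity term can be handled.
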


\begin{proof}
Let $(b_{n})\subset L^{2}([0,T];H^{3,\mathrm{div}}_{0}(\Omega)^{2})$ be a minimizing sequence of the cost functional. The coercivity of $\eqref{eq:costfunctional}$ is a natural property subject to the norm $\eqref{norm:h3}$. From the coercivity one has $(b_{n})$ is uniformly bounded in $L^{2}([0,T];H^{3}(\Omega)^{2})$, then there is a subsequence $(b_{n_{k}})$ of $(b_{n})$ converging weakly to $b$ in $L^{2}([0,T];H^{3}(\Omega)^{2})$. For each $b_{n}$ there exits a unique flow $\Phi_{n}^{-1}$, which is a diffeomorphism in $\Omega$ and absolutely continuous in $[0,T]$. Define 
\begin{equation*}
u_{n,\epsilon}=(u_{0}*\eta_{\epsilon})\circ\Phi^{-1}_{n}.
\end{equation*}
According to Lemma $\ref{lem:uniform boundedness}$ there exists a subsequence $(u_{n_{k},\epsilon})$, which converges to $u_{\epsilon}\in L^{2}([0,T];L^{2}(\Omega))$ in $L^{2}([0,T];L^{1}(\Omega))$ and converges for every $t\in[0,T]$ weakly to $u_{\epsilon}(t)$ in $L^{2}(\Omega)$. Theorem $\ref{thm:closeness}$ implies that $u_{\epsilon}=(u_{0}*\eta_{\epsilon})\circ\Phi^{-1}$.  Hence, it yields that
\begin{center}
\begin{tabular}{ccc}
$\int\limits_{\Omega}u^{t}_{n_{k},\epsilon}\varphi dx$ & $\longrightarrow$ &
$\int\limits_{\Omega}u^{t}_{\epsilon}\varphi dx$
\\
&&\\
$\downarrow$ & & $\downarrow$ \\
&&\\
$\int\limits_{\Omega}u^{t}_{n_{k}}\varphi dx$ & $\longrightarrow$ & $\int\limits_{\Omega}u^{t}\varphi dx$
\end{tabular} 
\end{center}
for every $\varphi\in L^{2}(\Omega)$. The left and right convergences in the diagram are valid due to the property of approximate identities according and then $u^{t}=u_{0}\circ\Phi^{-1}(t,\cdot)$. Hence, $u^{t}_{n_{k}}$ converges weakly to $u^{t}$ in $L^{2}(\Omega)$ for every $t\in[0,T]$. 

The l.s.c. of the first term in $\eqref{eq:costfunctional}$ can be easily derived from $u_{n_{k}}^{T}-u_{T}\rightharpoonup u^{T}-u_{T}$ in $L^{2}(\Omega)$. And the l.s.c. of the second term in $\eqref{eq:costfunctional}$ is valid due to the norm-continuity of $b$.\qed
\end{proof}

\section{First-order Optimality Conditions System}
\label{sec:4}
We use the Lagrangian technique to compute the first-order optimality conditions of control problem  \eqref{eq:costfunctional} governed by $\eqref{eq:transport}$ and $\eqref{eq:divergence-free}$.  Let us define first the minimizing functional with Lagrange multipliers $(p,q)$
\begin{equation}
\label{eq:lagrangian}
L(u,b,p,q)=J(u,b)+\int\limits_{0}^{T}\int\limits_{\Omega}(u_{t}+b\cdot\nabla u)pdxdt+\int\limits_{0}^{T}\int\limits_{\Omega}\mathrm{div}bqdxdt,
\end{equation}
the variable $p$ is the adjoint state of $u$ and $q$ is the adjoint state of $b$.
The functional derivatives of \eqref{eq:lagrangian} w.r.t. $u$ and $b$ yield the first-order necessary conditions system
\begin{equation}
\label{eqn:optimality system}
\left\{
\begin{array}{lcl}
u_{t}+b\cdot\nabla u=0,&\quad&u(0)=u_{0}\\[2ex]
p_{t}+b\cdot\nabla p=0,&\quad&p(T)=-(u(T)-u_{T})\\[2ex]
\mathrm{div}b=0,&\quad&\\[2ex]
\lambda\Delta^{3}b+\nabla q=p\nabla u,&\quad& b=0,\nabla_{n} b=0,\\
&&\Delta b=0\text{ on }\partial\Omega.
\end{array}
\right.
\end{equation}

\section{Algorithms}
\label{sec:5}
In this section we will present an efficient numerical algorithm to discretize the optimality conditions system. 
Regarding the forward and backward transport equations in $\eqref{eqn:optimality system}$ one can take 
advantage of explicit formula $\eqref{eq:classic solution}$ and estimate the backward flow 
by the fourth-order Runge-Kutta method. Another possibility for solving the transport equations is to utilize the explicit high-order 
TVD schemes with flux limiter \textquotedblleft superbee\textquotedblright~\cite{hirsch07,kuzmin04,borz02}. It works
very well for preserving the edges of images and avoiding oscillations of solutions.
The last equation of $\eqref{eqn:optimality system}$ is a triharmonic equation which stems from the use of space $H_0^3$ as penalty term in~\eqref{eq:costfunctional}. There are little articles about its numerical schemes, e.g. \cite{quang05}. But the algorithms are either not efficient or difficult to be applied directly. The motivation for this term was that $b$ has to be Lipschitz continuous to obtain a unique flow $\Phi$. If we apply some smooth initial flow $b^{0}$ in the discrete form of $\eqref{eqn:optimality system}$ and replacing $\Delta^3$ with $\Delta$ in~\eqref{eqn:optimality system} still leads to smooth enough $b$. Actually, according to~\cite{diperna89} an initial value $u_0\in L^2(\Omega)$ is transported into an $L^2(\Omega)$-function by a flow field $b\in H^1$. Hence, in our context we can also work with the optimality system
\begin{equation}
\label{eqn:optimality system_var}
\left\{
\begin{array}{lcl}
u_{t}+b\cdot\nabla u=0,&\quad&u(0)=u_{0}\\[2ex]
p_{t}+b\cdot\nabla p=0,&\quad&p(T)=-(u(T)-u_{T})\\[2ex]
\mathrm{div}b=0,&\quad&\\[2ex]
\lambda\Delta b+\nabla q=p\nabla u,&\quad& b=0,\nabla_{n} b=0,\\
&&\Delta b=0\text{ on }\partial\Omega.
\end{array}
\right.
\end{equation}
We remark that the assumption $u_0,u_T\in BV$ is not present in this
model anymore. One could easily use $U=BV$ and the $BV$-norm for the
difference $u(T)-u_T$ since this would only affect the right hand side
of the adjoint equation. However, in this case we have to ensure that
the flow field $b$ is Lipschitz- continuous. In numerical experiments
we found, that this did not alter the results too much and hence, we
use the optimality system~\eqref{eqn:optimality system_var}.

The hierarchical processing according to \cite{barron94_2}, i.e. a coarse to fine calculation, provides a good choice of $b^{0}$. The quality of $b^{0}$ depends strongly on the downsampling and upsampling procedures of images. 

With a divergence free initial value $b^{0}$ we propose a segregation loop in the spirit of \cite{borz02} to interpolate the intermediate image at time $t$:\\
\textbf{Segregation loop I}.\\
Suppose $n=1,\cdots,N_{loop}$ and $N_{loop}$ is the iteration
number. Given $u_{0},u_{T}$, $b^{n-1}(t)$, $\lambda^{n-1}$.  The
iteration process for solving~\eqref{eqn:optimality system_var} at
iteration $n$ proceeds as follows:
\begin{enumerate}
\item Compute $u^{n-1}(t), \nabla u^{n-1}(t)$ and $u^{n-1}(T)$ by the forward transport equation using $u_{0}$ and $b^{n-1}$.
\item Compute $p^{n-1}(t)$ by the backward transport equation using $-(u^{n-1}(T)-u_{T})$ and $b^{n-1}$.
\item Compute $b^{n}(t)$ by the Stokes equation with right-hand side $p^{n-1}(t)\nabla u^{n-1}(t)$ and a $\lambda^{n}$.
\end{enumerate}
After $N_{loop}$ iterations the intermediate image $u^{N_{loop}}(t)$
approximating $u$ at time $t$. Moreover, we use a monotonically
decreasing sequence $(\lambda^{n})$, which converges to a final
$\lambda^{*}$. However, thanks to the theory of Stokes equations
\cite{girault86}, we know that
\begin{equation}
\label{ineq:stokes}
\left\|b(t)\right\|_{H^{1}(\Omega)}\leq\dfrac{C}{\lambda}\left\|p(t)\nabla u(t)\right\|_{H^{-1}(\Omega)}, \text{ a.e. } t\in[0,T].
\end{equation}
In practice we find out that if we choose $(\lambda^{n})$ such that the norm of the right-hand side of $\eqref{ineq:stokes}$ is monotonically increasing, the value of $b(t)$ will be also increasing. However, the final $\lambda^{*}$ cannot be chosen too small such that the minimizing process of $\eqref{eq:costfunctional}$ is ill-posed.

Moreover, since the system $\eqref{eqn:optimality system}$ is a
necessary condition of minimizing functional
$\eqref{eq:costfunctional}$, one expects that the term
$\left\|u(T)-u_{T}\right\|_{L^{2}(\Omega)}$ is not very small. But since this is one of our final goals, we propose a modification of segregation loop I, which poses no requirement for choosing a specific sequence $(\lambda^{n})$ and gives better approximation of intermediate images. We modify segregation loop I as follows:\\
\textbf{Segregation loop II}.\\
Suppose $n=1,\cdots,N_{loop}$ and $N_{loop}$ is the iteration
number. Given $u_{0},u_{T}$, $b^{n-1}(t)$, $\lambda$. The iteration
process at iteration $n$ proceeds as follows:
\begin{enumerate}
\item Compute $u^{n-1}(t), \nabla u^{n-1}(t)$ and $u^{n-1}(T)$by the forward transport equation using $u_{0}$ and $b^{n-1}$.
\item Compute $p^{n-1}(t)$ by the backward transport equation using $-(u^{n-1}(T)-u_{T})$ and $b^{n-1}$.
\item Compute  the solution of the Stokes equations with right-hand side $p^{n-1}(t)\nabla u^{n-1}(t)$ and $\lambda$. Then, denote it by $\delta b^{n-1}(t)$ .
\item $b^{n}(t)=b^{n-1}(t)+\delta b^{n-1}(t)$.
\end{enumerate}
In segregation loop II we utilize the system $\eqref{eqn:optimality system_var}$ to estimate the update of the flow field and update the flow field in step $4$. This point of view is different from the original problem $\eqref{eqn:optimality system_var}$, but interestingly this modification actually solves the necessary condition of another minimizing problem.  If the segregation loop II converges, then the update $\delta b^{n-1}(t)$ converges to zero.  Since the initial value $b^{0}$ is divergence free and in each iteration the update flow $\delta b^{n-1}$ is divergence free, the limit of $b^{n}$  is also divergence free.

We denote $u^{*},p^{*},b^{*},q^{*}$ the limits of particular sequences and in this case $\delta b^{*}=0$.  Setting the limits into $\eqref{eqn:optimality system_var}$ we derive
\begin{equation}
\label{eqn:optimality limits}
\left\{
\begin{array}{lcl}
u^{*}_{t}+b^{*}\cdot\nabla u^{*}=0&\quad&u^{*}(0)=u_{0}\\[2ex]
p^{*}_{t}+b^{*}\cdot\nabla p^{*}=0&\quad&p^{*}(T)=-(u^{*}(T)-u_{T})\\[2ex]
\mathrm{div}b^{*}=0&\quad&b^{*}=0\text{ on }\partial\Omega\\[2ex]
\nabla q^{*}=p^{*}\nabla u^{*}&\quad&
\end{array}
\right.
\end{equation}
Actually, $\eqref{eqn:optimality limits}$ is the optimality system of another constrained minimization problem, namely 
\begin{equation}
\label{eq:short}
\dfrac{1}{2}\left\|u^{*}(T)-u_{T}\right\|^{2}_{L^{2}(\Omega)}
\end{equation}
subject to
\begin{equation}
\label{eqn:shortII}
\left\{
\begin{array}{lcl}
u^{*}_{t}+b^{*}\nabla u^{*} = 0&\quad&u^{*}(0)=u_{0}\\[2ex]
\mathrm{div}b^{*}=0&\quad&b^{*}=0\text{ on }\partial\Omega.
\end{array}
\right.
\end{equation}
Compared to $\eqref{eq:costfunctional}$ the functional
$\eqref{eq:short}$ is not regularized. But if we stop the segregation
loop II on time, i.e. the interpolation error does not vary too much,
then it is not surprising that segregation loop II gives good
approximation results of intermediate images. From the point of view
of regularization theory, one may see the segregation loop II as a
kind of a Landweber method for minimizing
$\|u(T)-u_T\|_{L^2(\Omega)}^2$ which is inspired by a
Tikhonov-functional.

In the most cases the forward interpolation from $u_{0}$ to $u_{T}$ and the backward interpolation from $u_{T}$ to $u_{0}$ are complementary, since the flow is only able to transport objects from somewhere to somewhere, but not able to create some new objects. If in the forward case some new objects appear, then in the backward case the new objects disappear. It means that backward interpolation is more suitable for interpolating the intermediate images. In practice, we take the average of forward and backward interpolations.

\subsection{Hierarchical Method}
In order to get a start value $b^{0}$ for the optimality system, the hierarchical processing is a good ansatz. It can be understood in level $l$ in the following steps:
\begin{enumerate}
\item Downsample the images into level $l$.
\item Solve system $\eqref{eqn:optimality system_var}$ in level $l$ out and get $b^{l}$.
\item Upsample the optical flow into level $l-1$ and get $b^{l-1}$.
\end{enumerate}
The estimated optical flow $b^{l-1}$ is a start value of the hierarchical method in level $l-1$. In coarsest level we assume the start value is zero. As above mentioned, the down- and up-sampling methods are decisively, i.e. it is supposed to lose the local structures of objects as small as possible while down- and up-sampling the images or the optical flow.

In practice, we apply bicubic interpolation \cite{william07} for the sampling, since it has fewer interpolation artifacts than bilinear interpolation or nearest-neighbor interpolation. Compared to the Gaussian pyramid \cite{peter83} the downsampled images by bicubic interpolation does look not so blurred.

\subsection{Numerical Schemes for Transport Equations}
To discretize the transport equations we can use the second-order TVD scheme. It is also suitable for the backward transport equation, since we can reform it into the forward problem by setting $t':=T-t$:
\begin{equation*}
p_{t'}-b\cdot\nabla p=0,\quad p(0)=-(u(0)-u_{T}).
\end{equation*}
Suppose the image size is $N\times M$, $h$ and $\Delta t$ are the mesh sizes in space and time, respectively with mesh index $i=1,\cdots,N, j=1,\cdots,M$ in space and $k=1,\cdots,K$ in time. The stability condition of the scheme, usually called CFL condition \cite{aubert02}, is 
\begin{equation*}
\sigma_{CFL}:=\max(|v|_{\max},|w|_{\max})\frac{\Delta t}{h}\leq1.
\end{equation*}
by setting $b:=(v,w)$. In practice we choose $\Delta t$ such that  $\sigma_{CFL}=0.1$. The TVD scheme of the forward transport equation is:
\begin{eqnarray*}
u_{t}|^{k}_{ij}&=&\frac{u_{ij}^{k+1}-u_{ij}^{k}}{\Delta t},\\
-vu_{x}|_{ij}^{k}&=&\frac{v_{ij}^{+}}{h}\left[
1+\frac{1}{2}\chi(r^{+}_{i-\frac{1}{2},j})-\frac{1}{2}\frac{\chi(r^{+}_{i-\frac{3}{2},j})}{r^{+}_{i-\frac{3}{2},j}}
\right](u_{i-1,j}^{k}-u_{ij}^{k})\\
&&-\frac{v_{ij}^{-}}{h}
\left[
1+\frac{1}{2}\chi(r^{-}_{i+\frac{1}{2},j})-\frac{1}{2}\frac{\chi(r^{-}_{i+\frac{3}{2},j})}{r^{-}_{i+\frac{3}{2},j}}
\right]\\
&&\cdot(u_{i+1,j}^{k}-u_{ij}^{k}),
\end{eqnarray*}
where $v_{ij}^{+}=\max(v_{ij},0),v_{ij}^{-}=\min(v_{ij},0)$ and the flux difference ratios are defined as
\begin{eqnarray*}
r^{+}_{i-\frac{1}{2},j}=\frac{u_{i+1,j}^{k}-u_{ij}^{k}}{u_{ij}^{k}-u_{i-1,j}^{k}},&& 
r^{+}_{i-\frac{3}{2},j}=\frac{u_{ij}^{k}-u_{i-1,j}^{k}}{u_{i-1,j}^{k}-u_{i-2,j}^{k}},\\
r^{-}_{i+\frac{1}{2},j}=\frac{u_{ij}^{k}-u_{i-1,j}^{k}}{u_{i+1,j}^{k}-u_{ij}^{k}},&& 
r^{-}_{i+\frac{3}{2},j}=\frac{u_{i+1,j}^{k}-u_{ij}^{k}}{u_{i+2,j}^{k}-u_{i+1,j}^{k}}.
\end{eqnarray*}
In the similar way we can discretize the term $-wu_{y}$. The superbee limiter function is given by
\begin{equation*}
\chi(r)=\max(0,\min(2r,1),\min(r,2)). 
\end{equation*}
To compute the spatial derivatives of images we use the standard three-point formula:
\begin{eqnarray*}
pu_{x}|_{ij}&=&\frac{1}{2h}(-u_{i-1,j}+u_{i+1,j})p_{ij},\\
pu_{y}|_{ij}&=&\frac{1}{2h}(-u_{i,j-1}+u_{i,j+1})p_{ij}.
\end{eqnarray*}
Another way for solving the transport equation is to utilize the characteristic solution. From $\eqref{eq:classic solution}$ we know the keypoint is to solve the backward flow starting from $(t,x)$
\begin{equation}
\label{eqn:ode-back}
\left\{
\begin{array}{l}
\dfrac{\partial \Phi}{\partial s} = b(s,\Phi)\quad\text{in }[0,t[\times\Omega,\\
\\
\Phi(t,x) = x\quad\text{in }\Omega.
\end{array}
\right.
\end{equation}
To solve $\eqref{eqn:ode-back}$ numerical efficiently we use Runge-Kutta 4th order method \cite{william07}. We discretize $[t,0]$ with time step $\Delta t=0.1$ and utilize a constant flow $b$ over $[t,0]$ due to saving the memory and computational cost. In this scheme we have to interpolate the flow $b(t,x)$ with some non-integer $x$, since only the flow $b(t,\cdot)$ with integer coordinates is given. For this we use bilinear interpolation (a bicubic interpolation leads to almost the same results).
Then, we warp the image $u_{0}$ with the coordinates calculated by $\eqref{eqn:ode-back}$ using cubic spline predefined in Matlab to approximate $u(t,x)$.

\subsection{Finite Element Methods for Stokes Equations}
As previously mentioned, after replacing $\Delta^{3}$ with $\Delta$ it is immediately seen that the last two equations in $\eqref{eqn:optimality system_var}$ are the Stokes equations. Stokes flow estimation was investigated in \cite{ruhnau06} and Suter applied the mixed finite element method \cite{suter94} for solving it. Moreover, the approximation of velocity field $b(t,\cdot)$ and pressure $q(t,\cdot)$ will achieved by the polynomial of second order (P2) and first order (P1), so-called Taylor and Hood elements \cite{elman91}. If the chosen finite element spaces satisfy the inf-sup condition, also called LBB condition \cite{elman91,fortin91}, then the method is stable.

The variational problem of the Stokes equations reads as follows:
\begin{equation}
\label{eqn:stockes-weak}
\left\{
\begin{array}{rcl}
a(b(t),v)+c(v,q(t)) &=& (f(t),v)\quad\forall v\in V,\\[2ex]
c(b(t),w) & = & 0,\quad\forall w\in W\,
\end{array}
\right.
\end{equation}
and the bilinear forms are defined by
\begin{eqnarray*}
a(b(t),v)&=&\int\limits_{\Omega}\lambda\nabla b(t)\nabla vdxdy,\\
c(v,q(t))&=&\int\limits_{\Omega}(\mathrm{div}v)q(t)dxdy,\\
(f(t),v)&=&-\int\limits_{\Omega}f(t)vdxdy,
\end{eqnarray*}
where $f:=p\nabla u,V:=H^{1}_{0}(\Omega)^{2}$ and 
\begin{equation*}
W:=\left\{w\in L^{2}(\Omega)~\Big|~\int\limits_{\Omega}wdxdy=0\right\}.
\end{equation*} 
The discretization of $\eqref{eqn:stockes-weak}$ using the mixed finite element produces a linear system of the form
\begin{equation}
\label{eq:matrix}
\begin{pmatrix}
A & C^{t} \\
C & 0 
\end{pmatrix}
\dbinom{b_{MN}}{p_{Q}}
=
\dbinom{f_{MN}}{0}.
\end{equation}
The approximation coefficients $b_{MN},p_{Q}$ and $f_{MN}$ are w.r.t. the basis of finite element spaces $V_{h}$ and $W_{h}$. The stiffness matrix $A$ has the following block form:
\begin{equation*}
A = 
\begin{pmatrix}
A_{1} & 0 \\
0 & A_{1}
\end{pmatrix},
\end{equation*}
where $A_{1}=\left(\int_{\Omega}\nabla\varphi_{i}\nabla\varphi_{j}dxdy\right)_{ij},i,j=1,\cdots,MN$ and $\varphi_{i}$ are the basic functions of $V_{h}$. The matrix $C^{t}$ has also a block form
\begin{equation*}
C^{t}=\dbinom{C_{1}^{t}}{C_{2}^{t}},
\end{equation*}
\begin{eqnarray*}
C_{1}^{t}&=&\left\{\int\limits_{\Omega}\frac{\partial\varphi_{i}}{\partial x}\psi_{j}dxdy ~\Big|~ i=1,\cdots,MN;j=1,\cdots,Q\right\}\\ 
C_{2}^{t}&=&\left\{\int\limits_{\Omega}\frac{\partial\varphi_{i}}{\partial y}\psi_{j}dxdy ~\Big|~ i=1,\cdots,MN;j=1,\cdots,Q\right\}.
\end{eqnarray*}
Similarly, $\psi_{i}$ are the basic functions of $W_{h}$. The vector $f=(f_{1},f_{2})^{t}$ is composed of scalar products $(f_{1},\varphi_{i})$ and $(f_{2},\varphi_{i})$ for $i=1,\cdots,MN$. We derive the interpolation polynomial of $f_{1},f_{2}$ w.r.t. the basic functions
\begin{eqnarray*}
f_{1}^{h}&=&\sum\limits_{i=1}^{MN}f_{1}(x_{i})\varphi_{i}\\
f_{2}^{h}&=&\sum\limits_{i=1}^{MN}f_{2}(x_{i})\varphi_{i},
\end{eqnarray*}
where $x_{i}$ is the corresponding measurement point of $\varphi_{i}$. Then,
\begin{eqnarray*}
f_{i}=(f_{1}^{h},\varphi_{i})&=&\sum\limits_{j=1}^{MN}f_{1}(x_{j})\int\limits_{\Omega}\varphi_{j}\varphi_{i}dxdy,~i=1,\cdots,MN \\
f_{i}=(f_{2}^{h},\varphi_{i})&=&\sum\limits_{j=1}^{MN}f_{2}(x_{j})\int\limits_{\Omega}\varphi_{j}\varphi_{i}dxdy,~i=MN+1,\cdots,2MN.
\end{eqnarray*}
For simplifying the estimation we just need to define the basic functions of a single element, i.e. a triangle or square, and derive the corresponding element stiffness matrix and element mass matrix, then assemble them into $A_{1}$, $C_{1}$, $C_{2}$, $f_{MN}$.

Since the matrix in $\eqref{eq:matrix}$ is sparse and symmetric, but not positive definite, the system $\eqref{eq:matrix}$ can be numerically solved by the routine bicgstab predefined in MATLAB.

\section{Numerical Experiments}
\label{sec:6}

\subsection{Parameter Choice Rule}
The essential parameters of the quality of image interpolation are the regularization parameter $\lambda$ and the downsampling level $l$. Experimentally, we find out that the optimal regularization parameter $\lambda_{opt}$ and $l$ are coupled. The downsampling level should be so adapted that at the lowest level $L$ the estimated optical flow is accurate with a $\lambda_{opt}^{L}$. At the higher level $l$ with $l<N$ the parameter $\lambda_{opt}^{l}$ is larger than $\lambda_{opt}^{N}$. In practice, we choose $\lambda_{opt}^{l}$ with $l<N$ by the following strategy:
\begin{enumerate}
\item Find a pair $(\lambda_{opt}^{L},L)$ experimentally at the lowest level $L$.
\item Choose $\lambda_{opt}^{l-1}$ such that $\lambda_{opt}^{l-1}/\lambda_{opt}^{l} \in [10^{0.2} 10^{0.5}]$ and the interpolation errors decrease at level $l-1$.
\end{enumerate}
The difference between segregation loop I and II lies in that segregation loop II equips with a constant $\lambda^{l}_{opt}$ at each level and segregation loop I applies a monotonically decreasing sequence converging to $\lambda^{l}_{opt}$ at each level. In case the image size is around $600\times400$ we set the lowest level $L=3$ and $\lambda^{L}_{opt}\in[10^{5} 10^{5.5}]$.

\subsection{Numerical Results}
To illustrate the effect of our intermediate interpolated images, we apply the interpolation error (IE) introduced by \cite{BakerSLRBS07}. Moreover, the IE measures the root-mean-square (RMS) difference between the ground-truth image $\tilde{u}$ and the interpolated image $u$
\begin{equation*}
IE = \left(\dfrac{1}{MN}\sum\limits_{i=1}^{N}\sum\limits_{j=1}^{M}(u(x_{i},y_{j})-\tilde{u}(x_{i},y_{j}))^{2}\right)^{\frac{1}{2}},
\end{equation*}
where $M\times N$ is the image size. We test our methods  on  the datasets generated by Middlebury with public ground-truth interpolation:
\begin{itemize}
\item Dimetrodon with size $584\times388$
\item Venus with size $420\times380$
\end{itemize}
Every dataset is composed of three images and the mid-image is the ground-truth interpolation at time $0.5$ if we assume the evolution process of three images lasts time $T=1$. To evaluate the interpolation we can compare our interpolation results with the ground-truth by means of IE measure.The ranking of the interpolation results calculated by segregation loop I and II refers to Table $\ref{tab:1}$.  As in \cite{BakerSLRBS07} mentioned the Pyramid LK method and Mediaplayer$^{\mathrm{TM}}$ are significantly better for interpolation than for ground-truth motion, since e.g. Mediaplayer$^{\mathrm{TM}}$ tends to overly extend the flow into textureless regions, which are not significantly affected by image interpolation.  According to Table $\ref{tab:1}$ segregation loop II works better than some classic methods and more accurate than segregation loop I. The places where the interpolation errors take place refer to Fig. $\ref{fig:1}-\ref{fig:2}$. As a result our methods especially segregation loop II work efficiently in image interpolation.
\begin{table}[!htbp]
\begin{tabular}{ccc}
 &  Dimetrodon & Venus\\
\textbf{Segregation loop I} & $\textbf{2.25}$ & $\textbf{6.67}$\\
\textbf{Segregation loop II} & $\textbf{1.95}$ & $\textbf{3.63}$\\
Stich et al. & $1.78$ & $2.88$ \\
Pyramid LK & $2.49$ & $3.67$\\
Bruhn et al. & $2.59$ & $3.73$\\
Black and Anandan & $2.56$ & $3.93$\\
Mediaplayer$^{\mathrm{TM}}$ & $2.68$ & $4.54$ \\
Zitnick et al. & $3.06$ & $5.33$\\
\end{tabular}
\caption{Interpolation errors calculated by our methods using the Middlebury datasets by comparison to the ground truth interpolation with 
results taken from \cite{stich08}.}
\label{tab:1}
\end{table}

The whole interpolation process of Middlebury datasets is accomplished by 9 generated images respectively using segregation loop I and II. The additional data generated into films are given in Online Resource.

\begin{figure*}[!htbp]
\centering
\begin{tabular}{cc}
\subfigcapskip = 0.2cm
\subfigure[]{\includegraphics[width=0.45\textwidth]{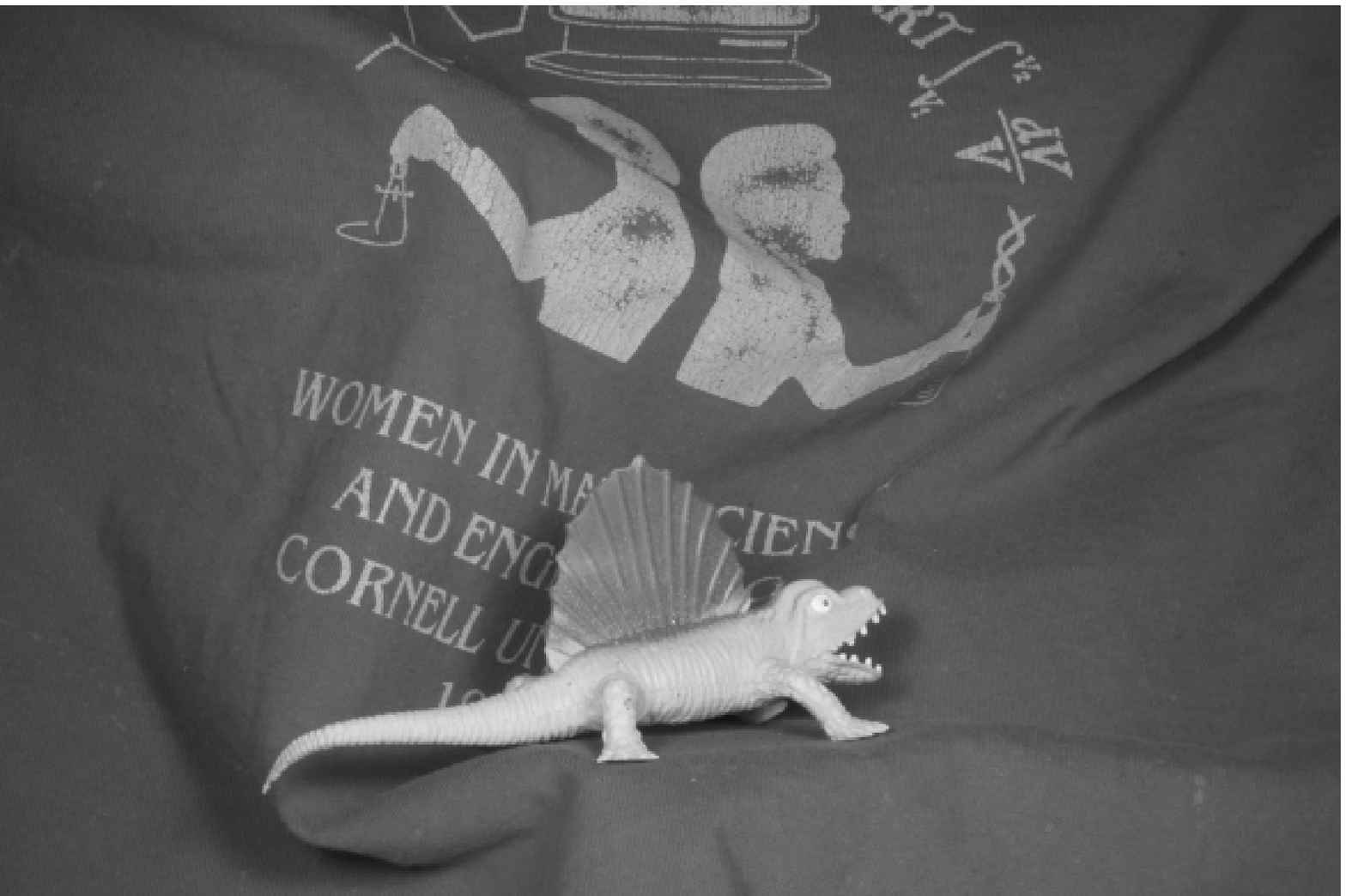}}
&
\subfigcapskip = 0.2cm
\subfigure[]{\includegraphics[width=0.45\textwidth]{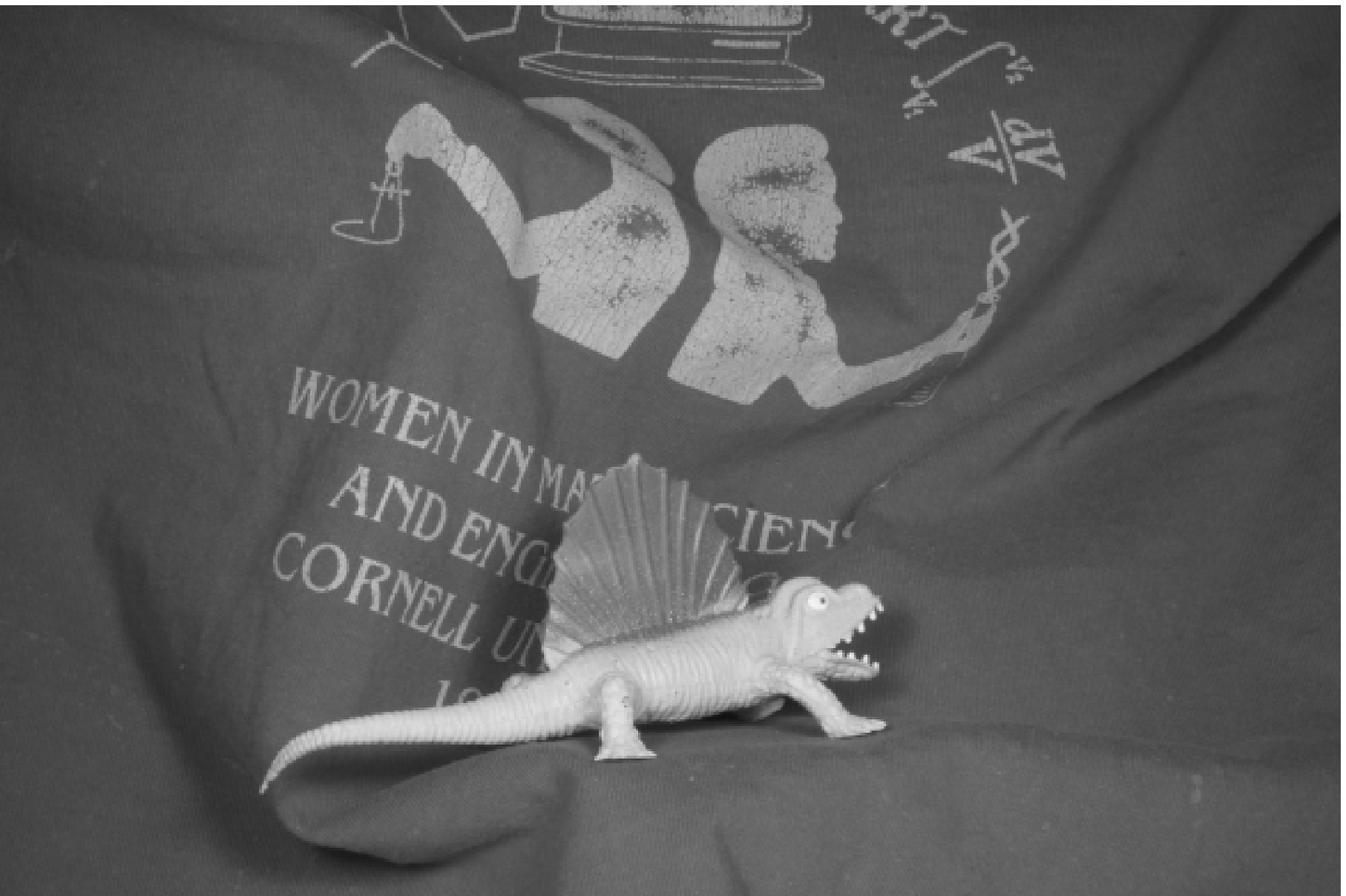}} \\

\subfigcapskip = 0.2cm
\subfigure[]{\includegraphics[width=0.45\textwidth]{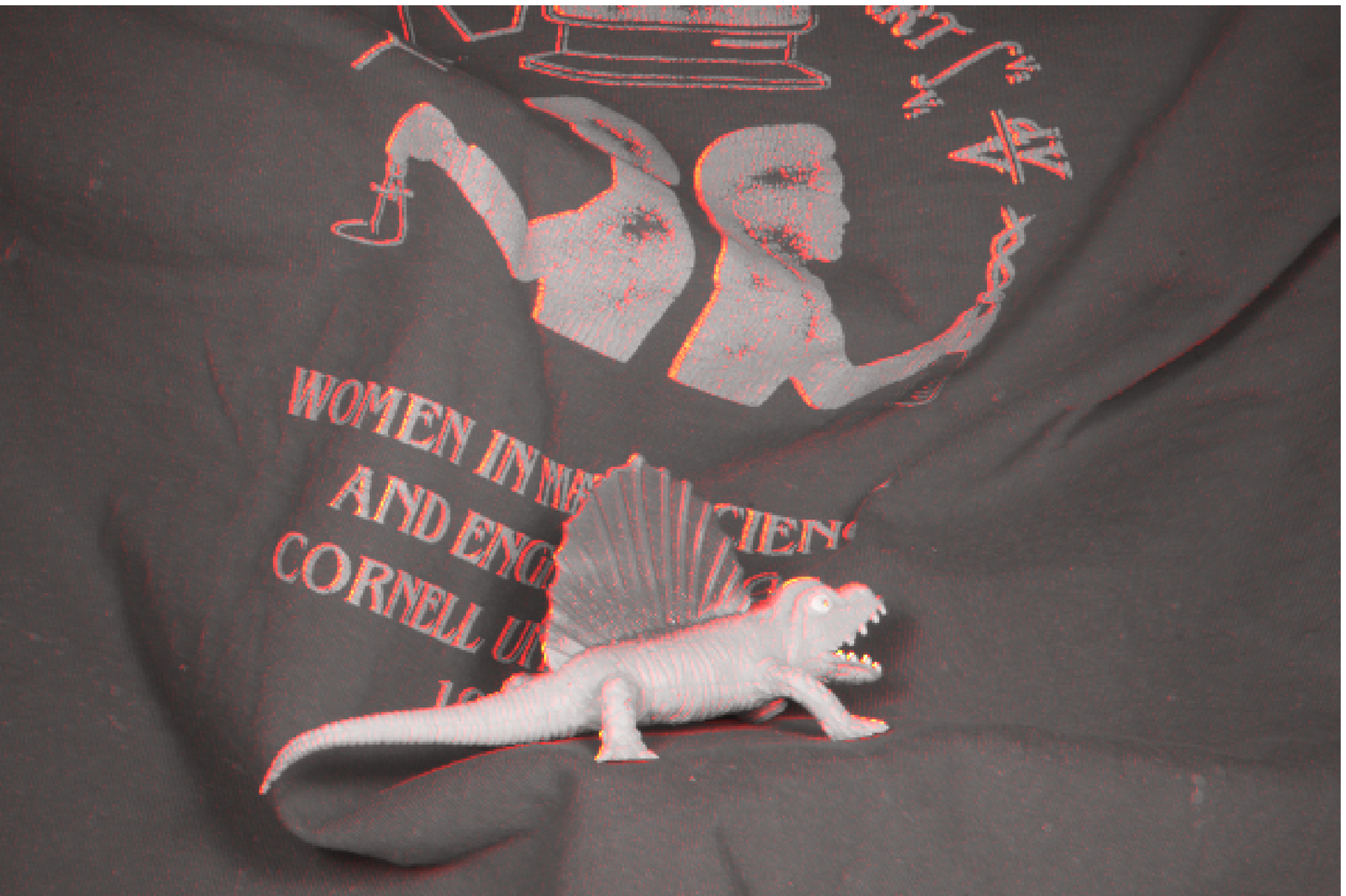}} 
&
\subfigcapskip = 0.2cm
\subfigure[]{\includegraphics[width=0.45\textwidth]{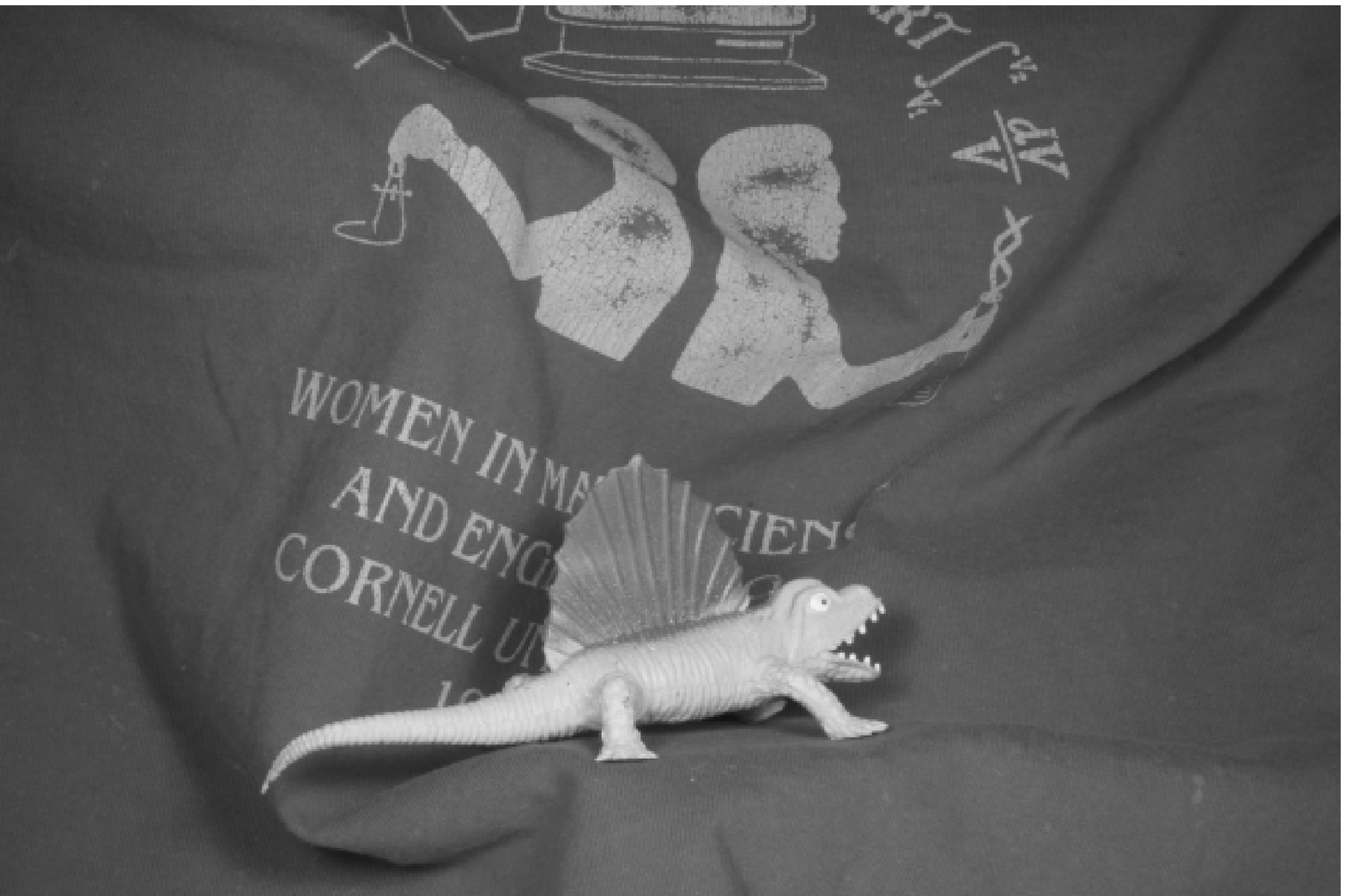}} \\

\subfigcapskip = 0.2cm
\subfigure[]{\includegraphics[width=0.45\textwidth]{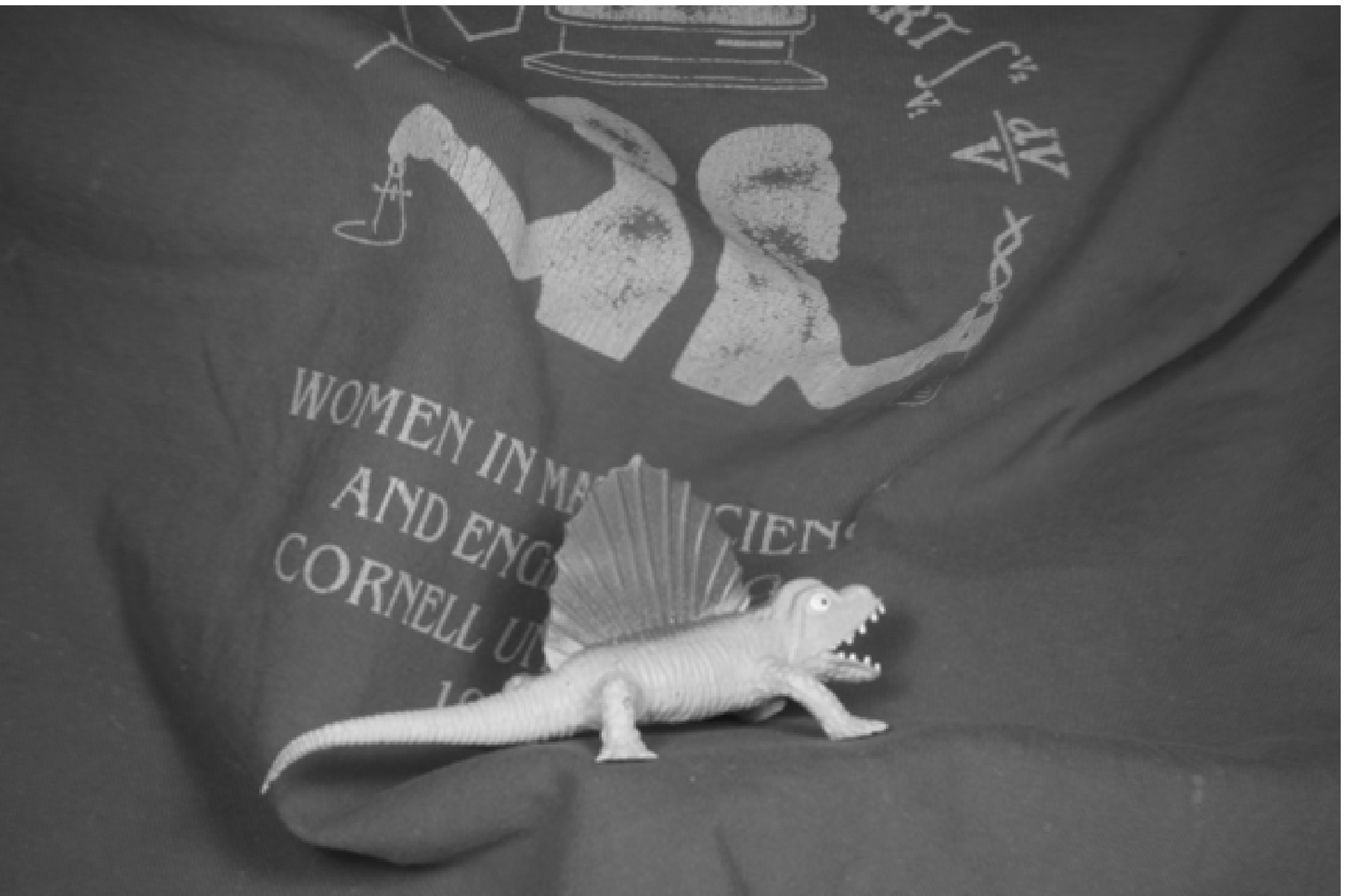}}
&
\subfigcapskip = 0.2cm
\subfigure[]{\includegraphics[width=0.45\textwidth]{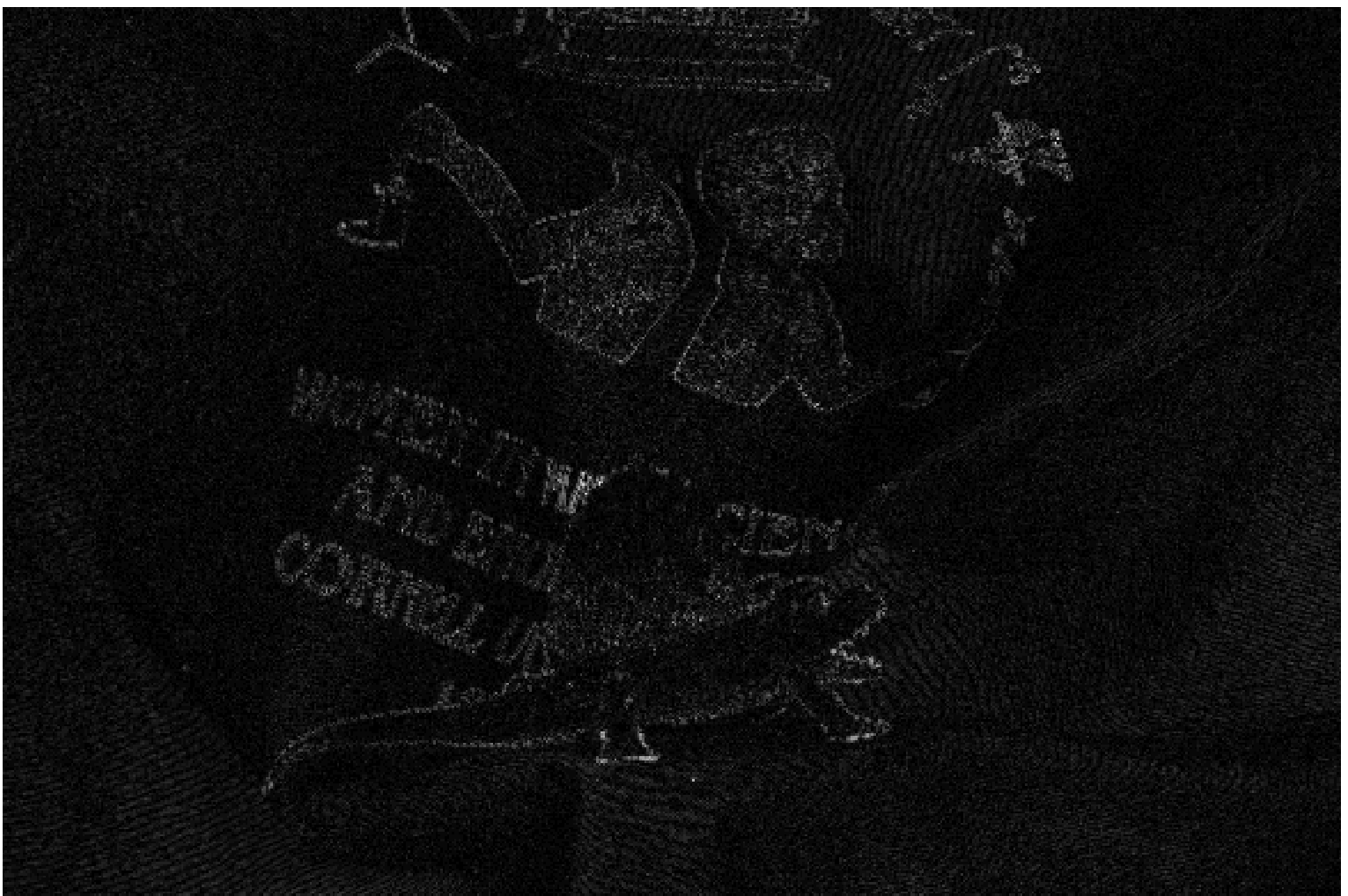}} \\

\subfigcapskip = 0.2cm
\subfigure[]{\includegraphics[width=0.45\textwidth]{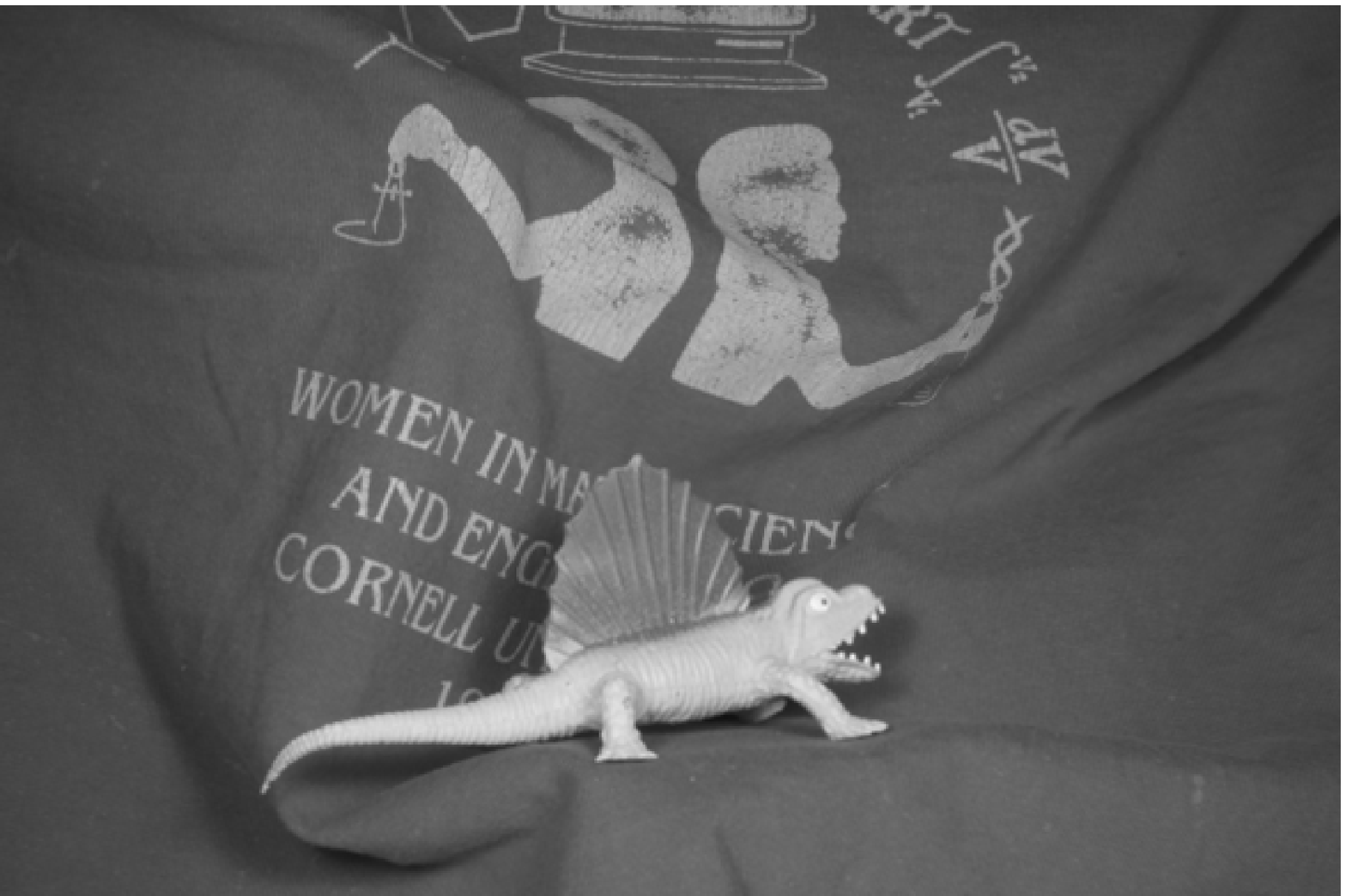}}
&
\subfigcapskip = 0.2cm
\subfigure[]{\includegraphics[width=0.45\textwidth]{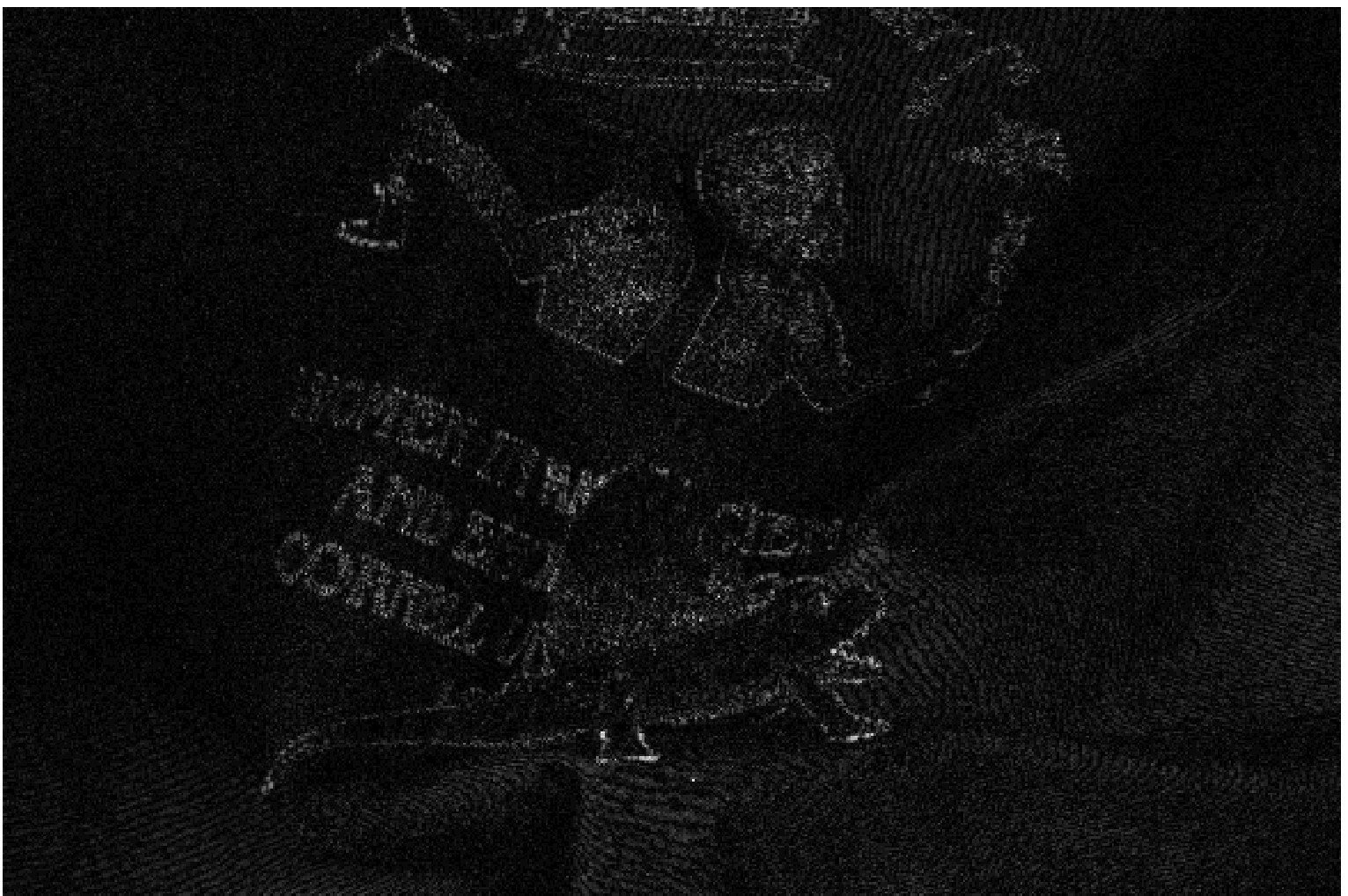}}
\end{tabular}
\caption{(a) $u_{0}$. (b) $u_{T}$. (c) $u_{0}$ plus the colored difference between $u_{0}$
 and $u_{T}$. (d) The groundtruth interpolation at time $T/2$  from the Middlebury datasets. (e)  The generated interpolation at time $T/2$ by segregation loop I.
(f)  The absolute difference between (d) and (e). (g) The generated interpolation at time $T/2$ by segregation loop II.
(h) The absolute difference between (d) and (g).}
\label{fig:1}       
\end{figure*}

\begin{figure*}[!htbp]
\centering
\begin{tabular}{cc}
\subfigcapskip = 0.2cm	
\subfigure[]{\includegraphics[width=0.34\textwidth]{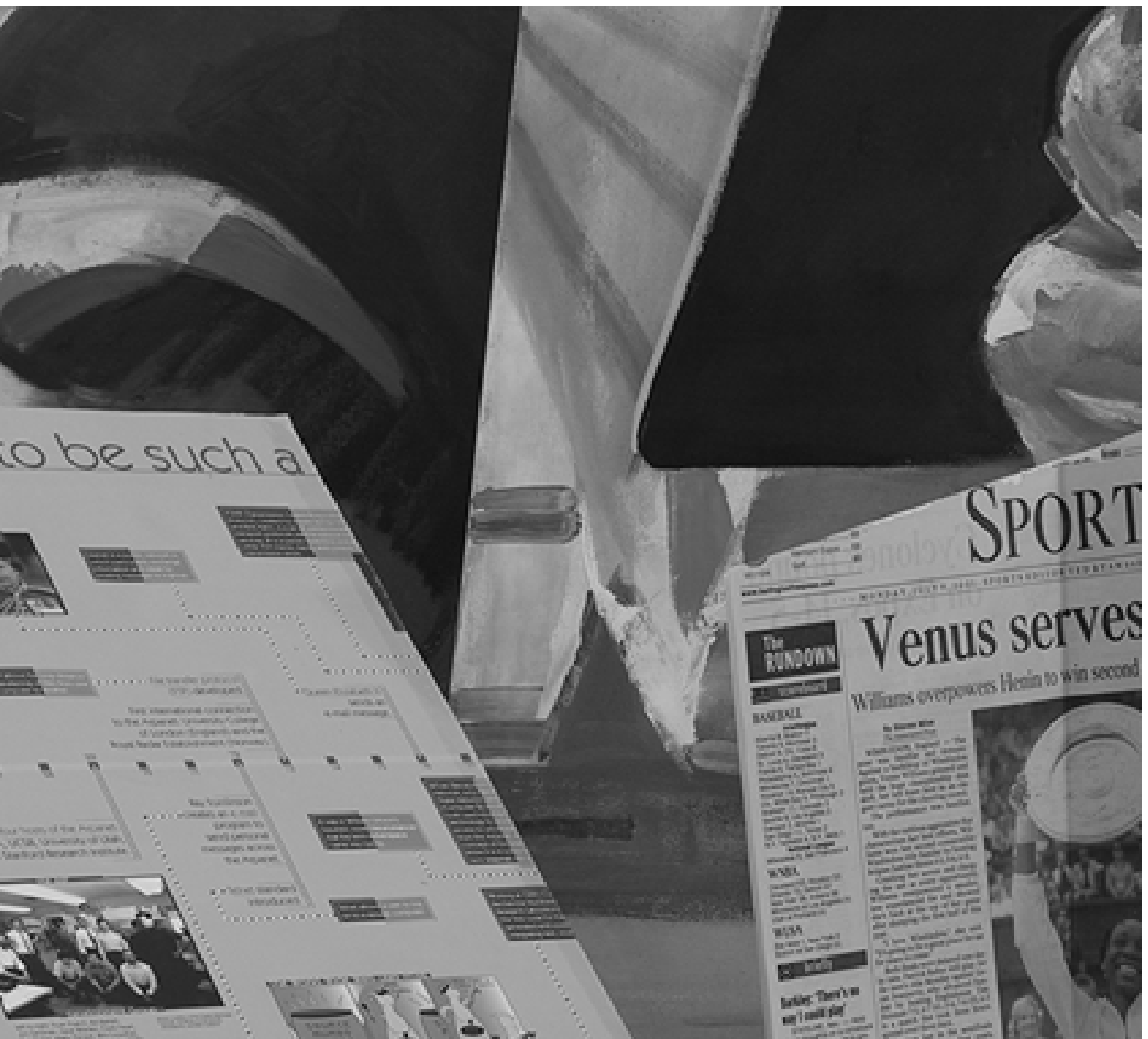}}
&
\subfigcapskip = 0.2cm
\subfigure[]{\includegraphics[width=0.34\textwidth]{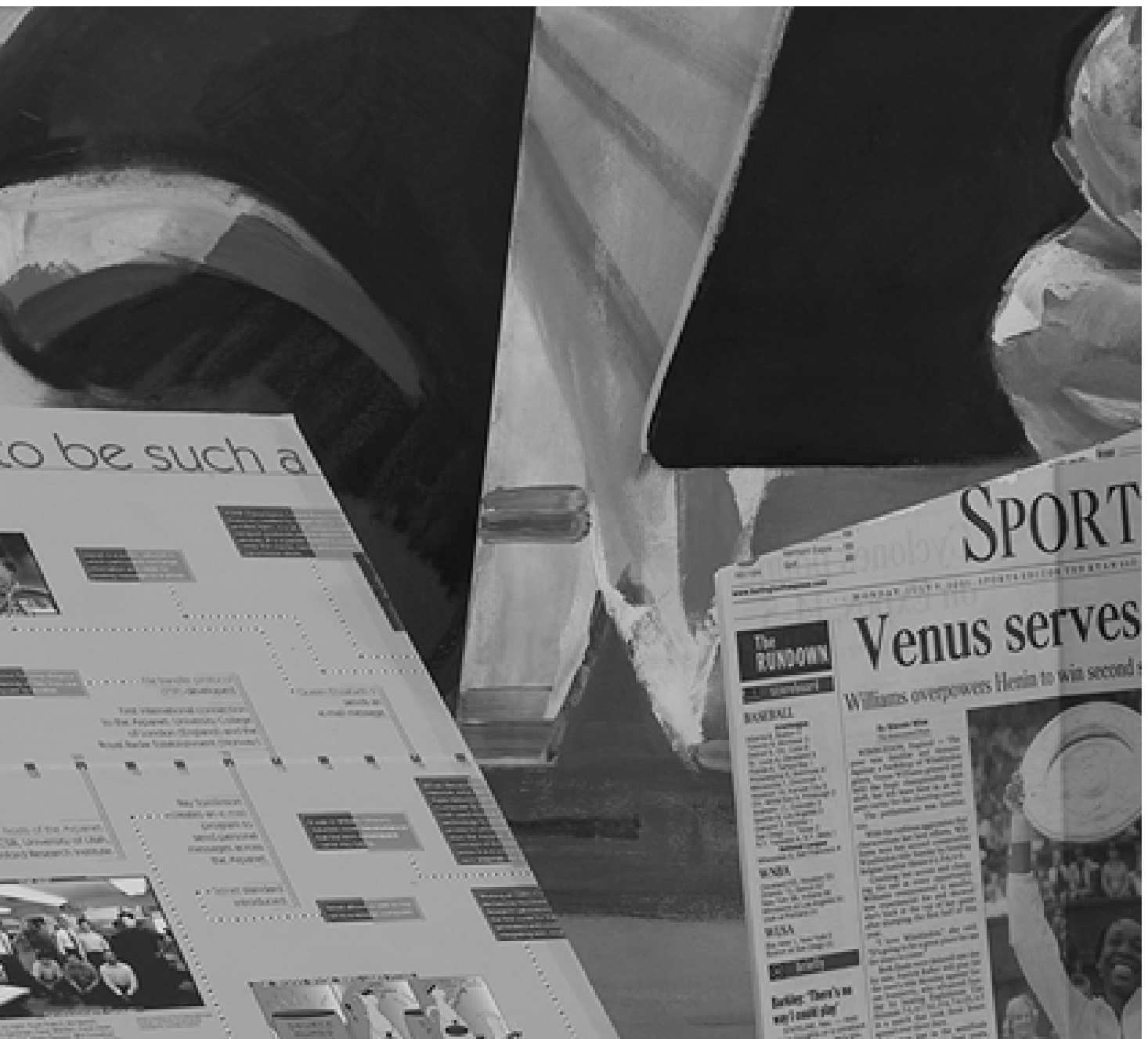}} \\

\subfigcapskip = 0.2cm
\subfigure[]{\includegraphics[width=0.34\textwidth]{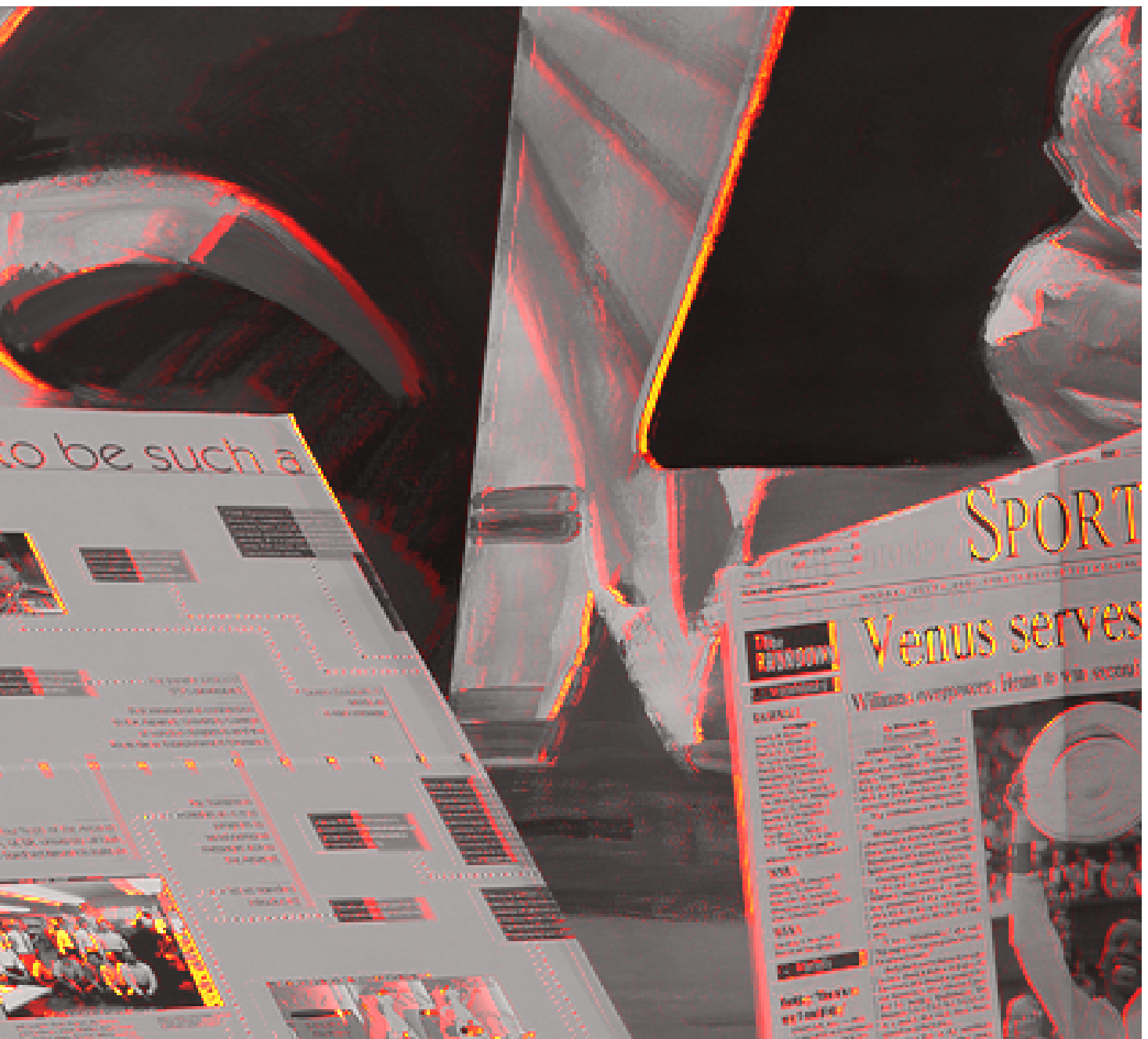}} 
&
\subfigcapskip = 0.2cm	
\subfigure[]{\includegraphics[width=0.34\textwidth]{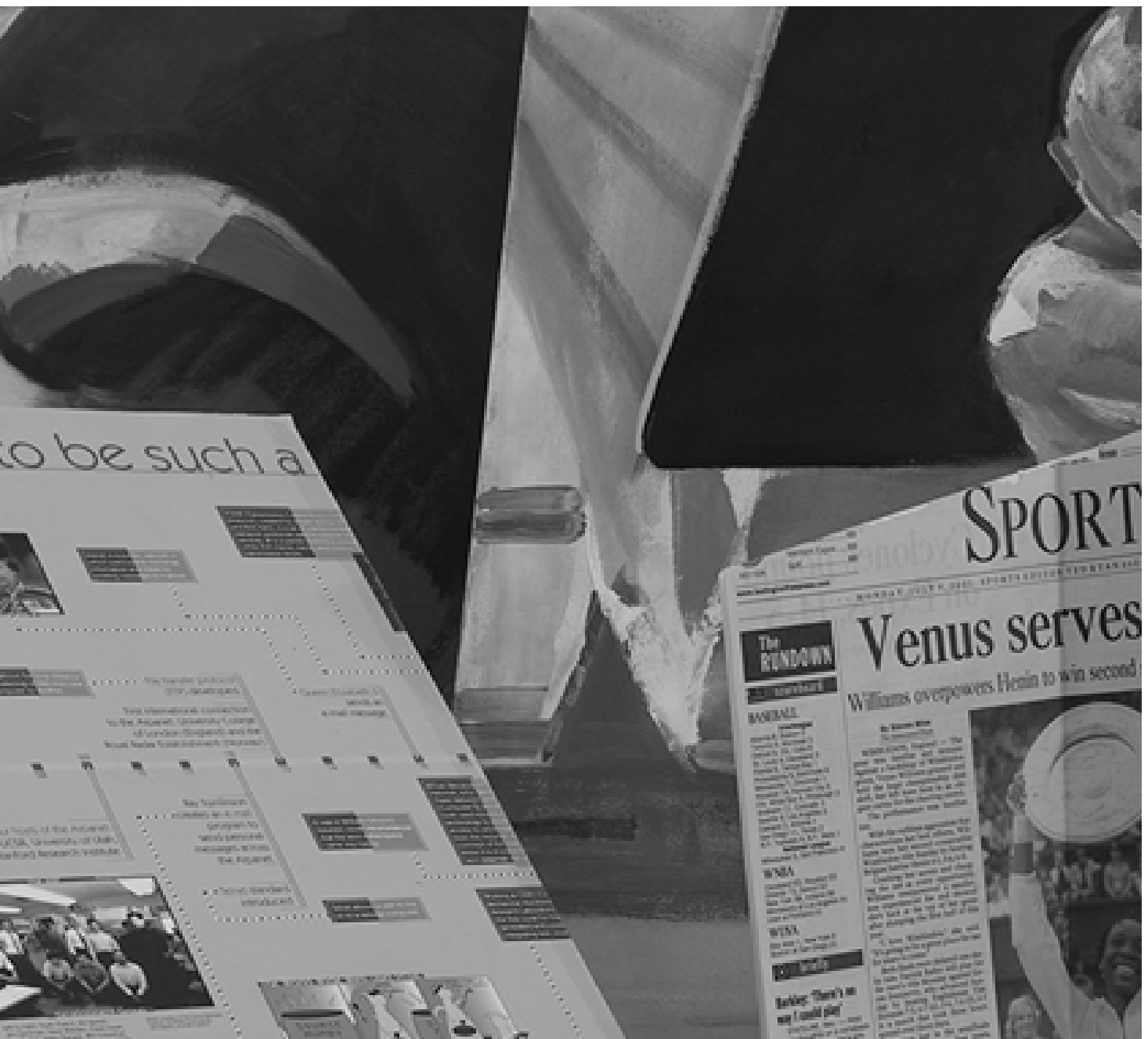}} \\

\subfigcapskip = 0.2cm
\subfigure[]{\includegraphics[width=0.34\textwidth]{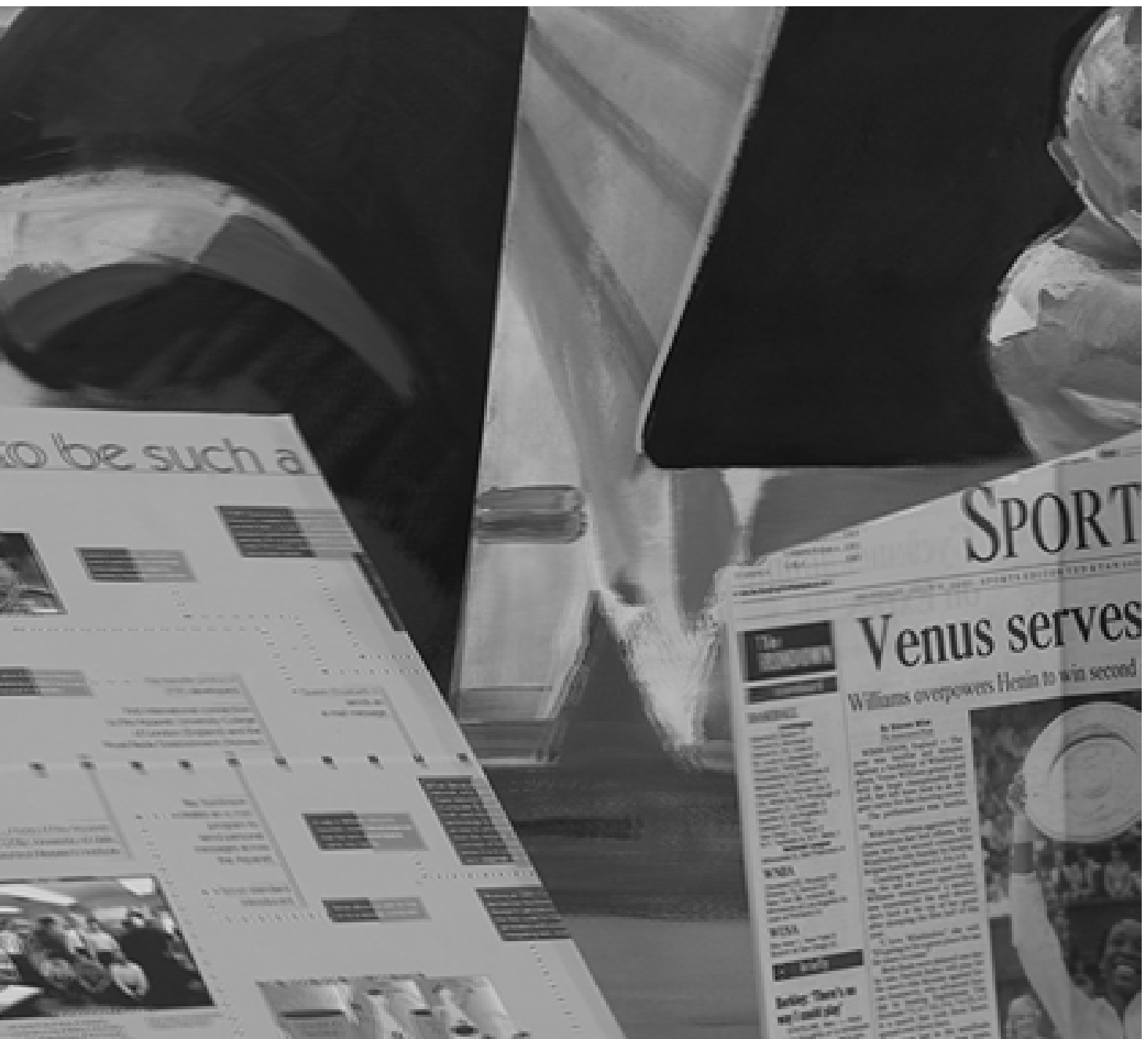}}
&
\subfigcapskip = 0.2cm
\subfigure[]{\includegraphics[width=0.34\textwidth]{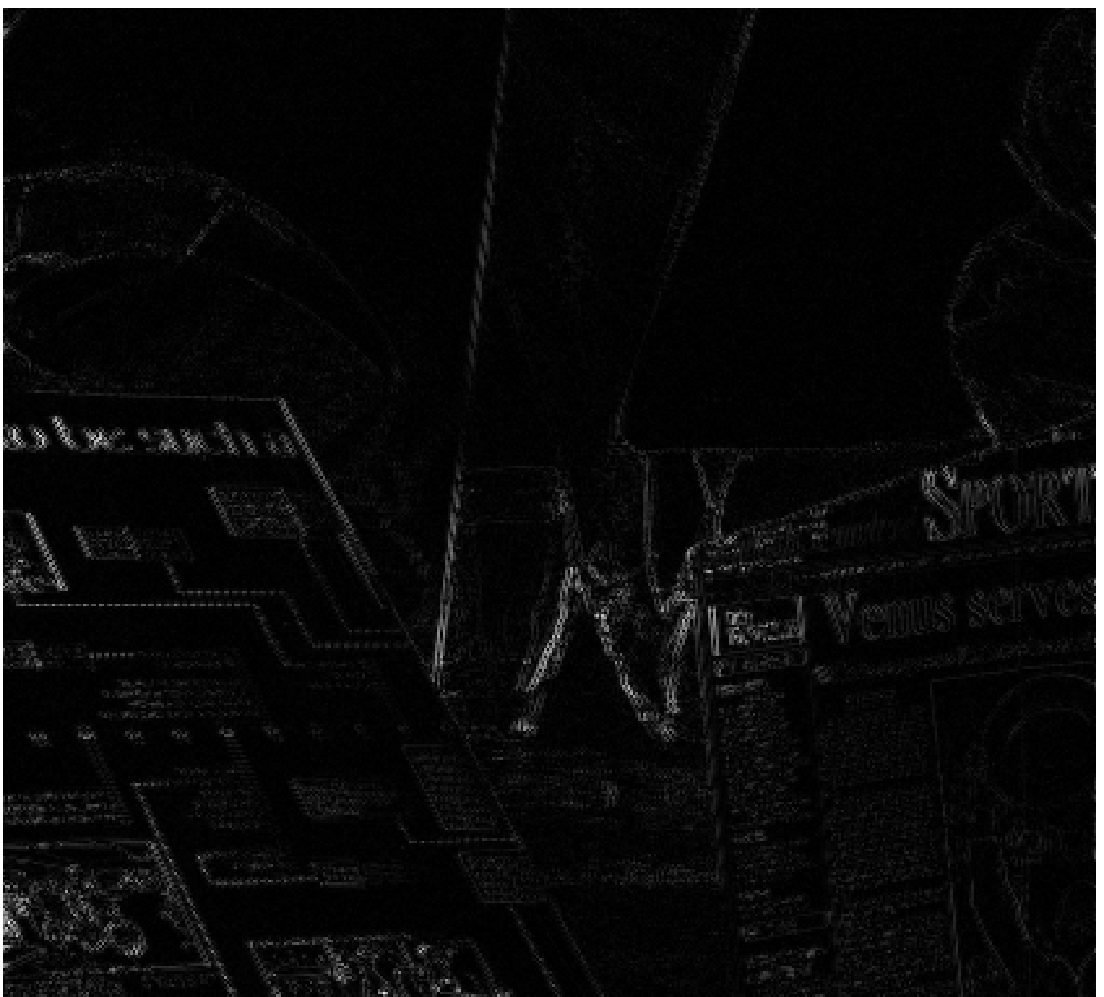}} \\

\subfigcapskip = 0.2cm
\subfigure[]{\includegraphics[width=0.34\textwidth]{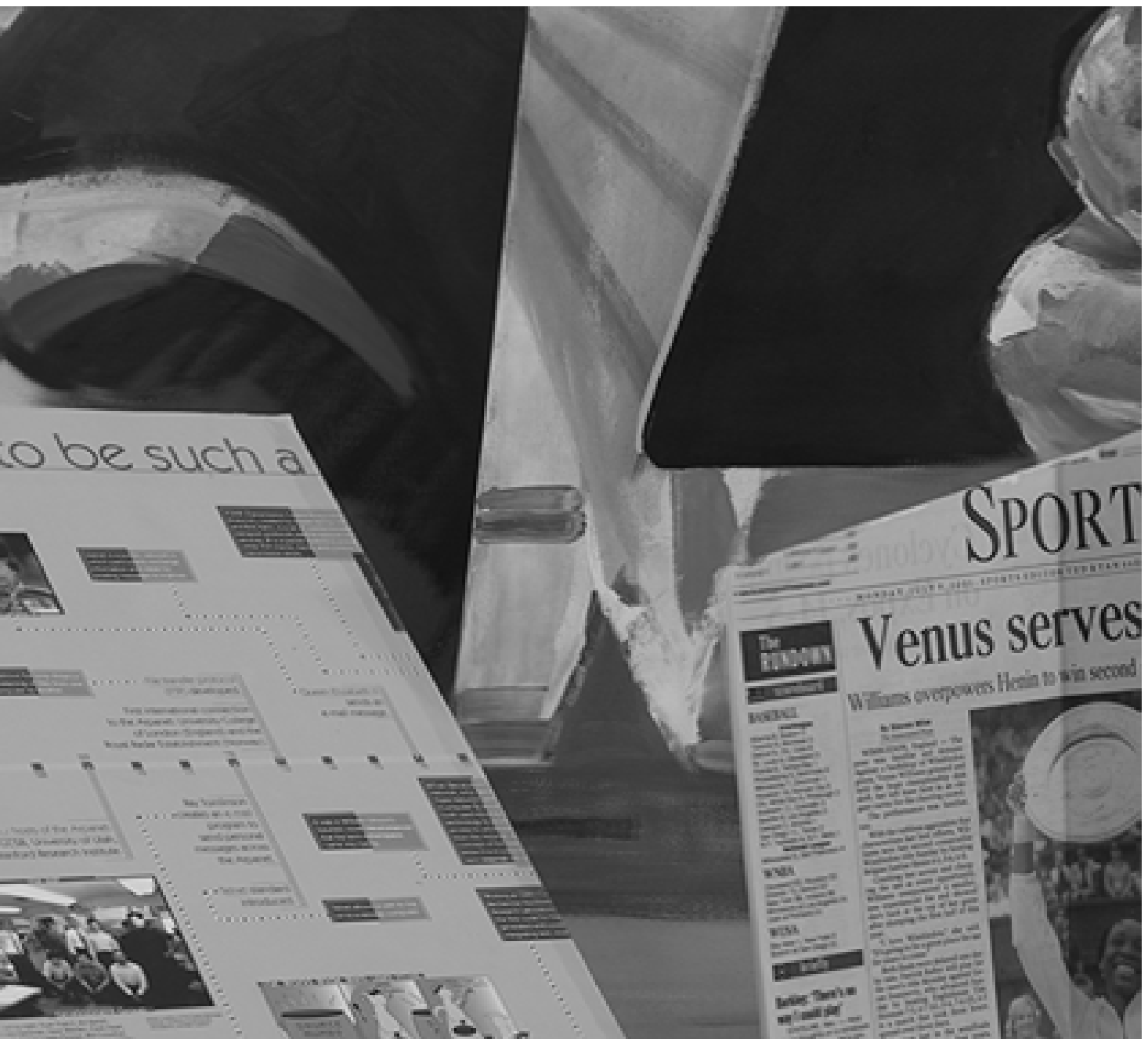}}
&
\subfigcapskip = 0.2cm
\subfigure[]{\includegraphics[width=0.34\textwidth]{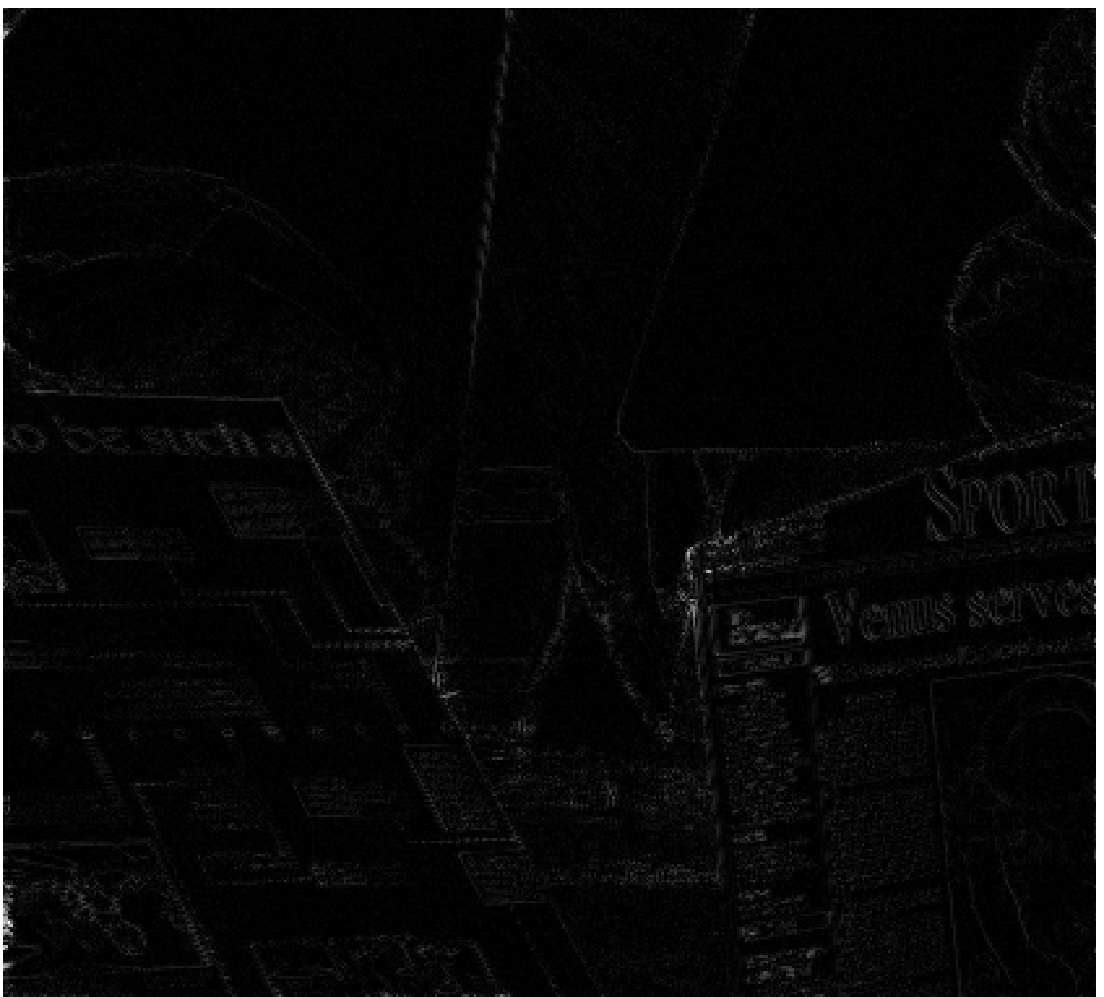}}
\end{tabular}
\caption{(a) $u_{0}$. (b) $u_{T}$. (c) $u_{0}$ plus the colored difference between $u_{0}$
 and $u_{T}$. (d) The groundtruth interpolation at time $T/2$ from the Middlebury datasets. (e)  The generated interpolation at time $T/2$ by segregation loop I. (f)  The absolute difference between (d) and (e). (g) The generated interpolation at time $T/2$ by segregation loop II. (h) The absolute difference between (d) and (g).}
\label{fig:2}       
\end{figure*}

\section{Conclusion and Outlooking}
\label{sec:7}
The approach to image sequence interpolation by optimal control of a
transport equation has proven to be useful and competitive to existing
methods. While we started to model the images in $BV$ we ended up with
an algorithm which does not exploit this regularity but merely uses
the $L^2$-structure. This was due to the fact that one needs
Lipschitz-continuous flow fields to preserve
$BV$-regularity~\cite{colombini04}. Hence, we finally used $H^1$ flow
fields. However, this still imposes some regularity on the flow field
and discontinuous flow fields are still not allowed. In further work
it may be interesting to use $BV$ vector fields and hence try to
transport an image with a possibly discontinuous flow field.  Another
open question is, how to deal with objects which appear in the second
image but are not present in the first image. One possibility could be
to use heuristic techniques to estimate motions which occlude or
disclose objects as described in~\cite{stich08}.

\end{document}